\documentclass{article} 
\usepackage{nips13submit_e,times}
\nipsfinalcopy
\usepackage[hyphens]{url}
\usepackage{url}
\usepackage{times}
\usepackage{graphicx} 
\usepackage{subfigure} 
%
\usepackage{natbib}
\usepackage{microtype}

\usepackage{algorithm}
\usepackage{algorithmic}
\usepackage{bm}
\usepackage{amsmath}
\usepackage{amssymb}



\newcommand{\reals}{\mathbb{R}}
\newcommand{\ignore}[1]{} 
\usepackage{algorithm}
\usepackage{algorithmic}
\usepackage{amsmath}
\usepackage{amsthm}
\newtheorem{theorem}{Theorem}
\newtheorem{lemma}{Lemma}
\newtheorem{corollary}{Corollary}
\newtheorem{definition}{Definition}
\usepackage{ dsfont }
\newcommand{\expm}{{\textrm{Expm}}}
\newcommand{\Od}{\mathcal{O}_d}
\newcommand{\On}{\mathcal{O}_n}
\newcommand{\Om}{\mathcal{O}_m}


\usepackage{atbeginend}
\usepackage{fancyhdr}

\fancyhf{}
\fancyfoot[L]{\footnotesize \emph{A shorter version of this paper will appear in the proceedings of the 31st International Conference for Machine Learning (ICML 2014).}}%
\pagestyle{empty}

\setlength{\belowcaptionskip}{-20pt}

\linespread{0.94}


\AfterBegin{itemize}{\addtolength{\itemsep}{-0.1 in}}

\title{Efficient coordinate-descent for orthogonal matrices through Givens rotations}
\vskip -5pt

\author{
Uri Shalit  \\
ICNC-ELSC \& Computer Science Department\\
Hebrew University of Jerusalem\\
91904 Jerusalem, Israel \\
\texttt{uri.shalit@mail.huji.ac.il} \\
\And
Gal Chechik \\
The Gonda Brain Research Center \\
Bar Ilan University \\
52900 Ramat-Gan, Israel \\
\texttt{gal.chechik@biu.ac.il} \\
}

%


\begin{document}
\thispagestyle{fancy}%
\vskip -3pt
\maketitle

\begin{abstract}

Optimizing over the set of orthogonal matrices is a central component
in problems like sparse-PCA or tensor decomposition. Unfortunately, such
optimization is hard since simple operations on orthogonal matrices
easily break orthogonality, and correcting orthogonality usually
costs a large amount of computation.
Here we propose a framework for optimizing orthogonal matrices, that
is the parallel of coordinate-descent in Euclidean spaces. It is based
on {\em Givens-rotations}, a fast-to-compute operation that affects a
small number of entries in the learned matrix, and preserves
orthogonality.
We show two applications of this approach: an algorithm for tensor
decomposition that is used in learning mixture models, and an
algorithm for sparse-PCA. We study the parameter regime where a 
Givens rotation approach converges faster and achieves a superior
model on a genome-wide brain-wide mRNA expression dataset.

\end{abstract}
\vspace{-5pt}
\section{Introduction}
Optimization over orthogonal matrices -- matrices whose rows and
columns form an orthonormal basis of $\reals^d$ -- is central to many
machine learning optimization problems. Prominent examples include
{\em Principal Component Analysis} (PCA), {\em Sparse PCA}, and {\em
Independent Component Analysis (ICA)}. In addition, many new
applications of tensor orthogonal decompositions were introduced
recently, including Gaussian Mixture Models, Multi-view Models and
Latent Dirichlet Allocation (e.g., \citet{anandkumar2012tensor,hsu2013learning}).

A major challenge when optimizing over the set of orthogonal matrices
is that simple updates such as matrix addition usually break
orthonormality. Correcting by orthonormalizing a matrix $V \in
\reals^{d \times d}$ is typically a costly procedure: even a change to
a single element of the matrix, may require $O(d^3)$ operations in the
general case for re-orthogonalization.

In this paper, we present a new approach for optimization over the
manifold of orthogonal matrices, that is based on a series of sparse
and efficient-to-compute updates that operate {\bf within the set of
orthonormal matrices}, thus saving the need for costly
orthonormalization. The approach can be seen as the equivalent of
coordinate descent in the manifold of orthonormal matrices. Coordinate
descent methods are particularly relevant for problems that are too big
to fit in memory, for problems where one might be satisfied with a
partial answer, or in problems where not all the data is available at
one time \citep{richtarik2012iteration}.

We start by showing that the orthogonal-matrix equivalent of a single
coordinate update is applying a single {\em{Givens rotation}} to the
matrix.  In section \ref{sec:conv} we prove that for a
differentiable objective the procedure converges to a local
optimum under minimal conditions, and prove an $O(1/T)$ convergence rate for the norm of the
gradient.  Sections \ref{sec:spca} and \ref{sec:tensor} describe two
applications: (1) sparse PCA, including a variant for streaming data;
(2) a new method for orthogonal tensor decomposition. We study how the
performance of the method depends on the problems hyperparameters
using synthetic data, and demonstrate that it achieves superior
accuracy on an application of sparse-PCA for analyzing gene expression
data.

\section{Coordinate descent on the orthogonal matrix manifold}

Coordinate descent (CD) is an efficient alternative to gradient descent
when the cost of computing and applying a gradient step at a single
coordinate is small relative to computing the full gradient. In these
cases, convergence can be achieved with a smaller number of computing
operations, although using a larger number of (faster) steps.

Applying coordinate descent to optimize a function involves choosing a
coordinate basis, usually the standard basis. Then calculating a directional
derivative in the direction of one of the coordinates. And finally, updating
the iterate in the direction of the chosen coordinate.

%

To generalize CD to operate over the set of orthogonal matrices, we
need to generalize these ideas of directional derivatives and updating
the orthogonal matrix in a ``straight direction''.

In the remaining of this section, we introduce the set of orthogonal
matrices, $\Od$, as a Riemannian manifold. We then show that applying
coordinate descent to the Riemannian gradient amounts to multiplying
by Givens rotations.  Throughout this section and the next, the
objective function is assumed to be a differentiable function $f: \Od
\rightarrow \reals$.

\subsection{The orthogonal manifold and Riemannian gradient}
The orthogonal matrix manifold $\Od$ is the set of $d \times d$
matrices $U$ such that $U U^T = U^T U = I_d$. It is a
 dimensional smooth manifold, and is an embedded
submanifold of the Euclidean space $R^{d \times d}$
\citep{absil2009optimization}.

Each point $U \in \Od$ has a tangent space associated with it,  a
$\frac{d(d-1)}{2}$ dimensional vector space, that we will use
below in order to capture the notion of 'direction' on the manifold.
The tangent space is denoted $T_U \Od$, and defined by $ T_U \Od = \{
Z \in \reals^{d\times d}, Z = U \Omega : \Omega = -\Omega^T \} = U
Skew(d), $ where $Skew(d)$ is the set of skew-symmetric $d
\times d$ matrices.

\subsubsection{Geodesic directions}
The natural generalization of straight lines to manifolds are {\em
geodesic curves}. A geodesic curve is locally the ``shortest'' curve between
two points on the manifold, or equivalently, a curve with no
acceleration tangent to the manifold
\citep{absil2009optimization}. For a point $U \in \Od$ and a
``direction'' $U\Omega \in T_U \Od$ there exists a single geodesic
line that passes through $U$ in direction $\Omega$.  Fortunately,
while computing a geodesic curve in the general case might be hard,
computing it for the orthogonal matrix manifold has a closed form
expression: $\gamma:(-1,1) \rightarrow \Od$, $\gamma(\theta) =
U\textrm{Expm} (\theta \Omega)$, where $\gamma(\theta)$ with
$\theta\in(-1,1)$ is the parameterization of the curve, and $\expm$ is
the matrix exponential function.

In the special case where the operator $Expm(\Omega)$ is applied to a
skew-symmetric matrix $\Omega$, it maps $\Omega$ into an orthogonal
matrix \footnote{Because $\expm(\Omega) \expm(\Omega)^T =
\expm(\Omega) \expm(\Omega^T) = \expm(\Omega) \expm(-\Omega) = I$}. As
a result, $\gamma(\theta) = U\textrm{Expm} (\theta \Omega)$ is also an
orthogonal matrix for all $-1<\theta<1$. This provides a useful
parametrization for orthogonal matrices.

\subsubsection{The directional derivative}
In analogy to the Euclidean case, the Riemannian directional
derivative of $f$ in the direction of a vector $U\Omega \in T_U \Od
$ is defined as the derivative of a single variable function which
involves looking at $f$ along a single curve
\citep{absil2009optimization}:

\begin{equation}
\label{eq:deriv}
\nabla_{\Omega} f(U) \equiv \frac{\textrm{d}}{\textrm{d}\theta} f(\gamma(\theta))\Big{|}_{\theta=0} = \frac{\textrm{d}}{\textrm{d}\theta} f(U\textrm{Expm} (\theta \Omega))\Big{|}_{\theta=0}.
\end{equation}
Note that $\nabla_{\Omega} f(U)$ is a scalar.
The definition means that the directional derivative is the limit of $f$ along the geodesic curve going through $U$ in the direction $ U\Omega$.

\subsubsection{The directional update}
Since the Riemannian equivalent of walking in a straight line is
walking along the geodesic curve, taking a step of size $\eta > 0$
from a point $U \in \Od$ in direction $U \Omega \in T_U \Od$ amounts
to:
\begin{equation}
\label{eq:update}
U_{next} = U \expm\left(\eta \Omega\right),
\end{equation}

We also have to define the orthogonal basis for $Skew(d)$. Here we
use $\{e_i e_j^T - e_j e_i^T : 1\leq i < j \leq d\}$.  We denote each
basis vector as $H_{ij} = e_i e_j^T - e_j e_i^T$, $1\leq i < j \leq
d$.

\subsection{Givens rotations as coordinate descent}
Coordinate descent is a popular method of optimization in Euclidean
spaces. It can be more efficient than computing full gradient steps
when it is possible to (1) compute efficiently the coordinate
directional derivative, and (2) apply the update efficiently. We will
now show that in the case of the orthogonal manifold, applying the
update (step 2) can be achieved efficiently. The cost of computing the
coordinate derivative (step 1) depends on the specific nature of the
objective function $f$, and we we show below several cases where that
can be achieved efficiently.

Let $H_{ij}$ be a coordinate direction, let $\nabla_{H_{ij}} f(U)$ be
the corresponding directional derivative, and choose step size $\eta
>0$. \ignore{he choice of $\eta$ can be based on a Lipschitz constant,
on the second derivative, or according to some optimization schedule
such as decreasing size). }  
A straightforward calculation based on Eq. \ref{eq:update} shows that
the update $U_{next} = U \expm (-\eta H_{ij})$ obeys
 \ignore{Recall that $H_{ij} = e_i e_j^T - e_j e_i^T$.} 

$$\expm(-\eta H_{ij}) = \quad \quad\quad\quad\quad $$
\begin{equation*}\label{eq:defgiv}
 \begin{bmatrix}   
                         1   & \cdots &    0   & \cdots &    0   & \cdots &    0   \\
                      \vdots & \ddots & \vdots &        & \vdots &        & \vdots \\
                         0   & \cdots &  cos(\eta)   & \cdots &    -sin(\eta)   & \cdots &    0   \\
                      \vdots &        & \vdots & \ddots & \vdots &        & \vdots \\
                         0   & \cdots &   sin(\eta)   & \cdots &    cos(\eta)   & \cdots &    0   \\
                      \vdots &        & \vdots &        & \vdots & \ddots & \vdots \\
                         0   & \cdots &    0   & \cdots &    0   & \cdots &    1
       \end{bmatrix} 
\end{equation*}    

This matrix is known as a \textit{Givens rotation}
\citep{golub2012matrix} and is denoted $G(i,j,-\eta)$. It has
$cos(\eta)$ at the $(i,i)$ and $(j,j)$ entries, and $\pm sin(\eta)$ at
the $(j,i)$ and $(i,j)$ entries. It is a simple and sparse orthogonal
matrix. For a dense matrix $A \in \reals^{d \times d}$, the linear
operation $A \mapsto AG(i,j,\eta)$ rotates the $i^{th}$ and $j^{th}$
columns of $A$ by an angle $\eta$ in the plane they span. Computing this
operation costs $6d$ multiplications and additions. As a result,
computing Givens rotations successively for all $\frac{d (d-1)}{2}$
coordinates $H_{ij}$ takes $O(d^3)$ operations, the same order as
ordinary matrix multiplication. Therefore the relation between the
cost of a single Givens relative to a full gradient update is the same
as the relation between the cost of a single coordinate update and a
full update is in Euclidean space. We note that any determinant-1 orthogonal matrix
can be decomposed into at most $\frac{d (d-1)}{2}$ Givens rotations.

\subsection{The givens rotation coordinate descent algorithm}

Based on the definition of givens rotation, a natural algorithm for
optimizing over orthogonal matrices is to perform a sequence of
rotations, where each rotation is equivalent to a coordinate-step in
CD.

To fully specify the algorithm we need two more ingredients: (1)
Selecting a schedule for going over the coordinates and (2) Selecting
a step size. For scheduling, we chose here to use a random order of
coordinates, following many recent coordinate descent papers
\citep{richtarik2012iteration,nesterov2012efficiency,patrascu2013efficient}.

For choosing the step size $\eta$ we use exact minimization,
since we found that for the problems we attempted to solve, using
exact minimization was usually the same order of complexity as
performing approximate minimization (like using an Armijo step rule
\citet{bertsekas1999nonlinear, absil2009optimization}). 

\ignore{ Specifically, To find a step size $\eta$ which minimizes
$f(U G(i,j,\eta)) = \hat{f}(\eta)$, we note that the function
$\hat{f}(\eta)$ is a single-variable periodic function. We therefore
minimize over $\eta$ by obtaining a closed form solution or by
standard methods for minimizing a single variable function on a
bounded interval.}

Based on these two decisions, Algorithm (\ref{alg1}) is a random
coordinate minimization technique.
\begin{algorithm}
\caption{Riemannian coordinate minimization on $\Od$}
\label{alg1}
\begin{algorithmic}
\REQUIRE Differentiable objective function $f$, initial matrix $U_0 \in \Od$
\STATE $t=0$
\WHILE{not converged}
\STATE 1. Sample uniformly at random a pair $(i(t),j(t))$ such that $1 \leq i(t) < j(t) \leq d$.
\STATE 2. $\theta_{t+1} = \underset{\theta }{\operatorname{argmin}} \; f\left(U_t \cdot G(i,j,\theta)\right)$.
\STATE 3. $U_{t+1} = U_t \cdot G(i,j,\theta_{t+1})$.
\STATE 4. $t = t+1$.
\ENDWHILE
\ENSURE $U_{final}$.
\end{algorithmic}
\end{algorithm}

\section{Convergence rate for Givens coordinate minimization}
\label{sec:conv}
In this section, we show that under the assumption that the objective
function $f$ is differentiable Algorithm 1 converges to critical point of the
function $f$, and the only stable convergence points are local minima.
We further show that the expectation w.r.t. the random
choice of coordinates of the squared $l_2$-norm of the Riemannian
gradient converges to $0$ with a rate of $O(\frac{1}{T})$ where $T$ is
the number of iterations.  The proofs, including some auxiliary
lemmas, are provided in the supplemental material. Overall we provide
the same convergence guarantees as provided in standard non-convex
optimization (e.g., \citet{nemirovskioptimization,bertsekas1999nonlinear}). 

\begin{definition}{Riemannian gradient}\\
The Riemannian gradient $\nabla f(U)$ of $f$ at point $U \in \Od$ is
the matrix $U \Omega$, where $\Omega \in Skew(d)$, $\Omega_{ji} = -
\Omega_{ij} = \nabla_{ij}f(U), 1 \leq i < j \leq d$ is defined to be the directional
derivative as given in Eq. \ref{eq:deriv}, and $\Omega_{ii} = 0$.
The norm of the Riemannian gradient $||\nabla f(U) ||^2 = Tr(\nabla
f(U) \nabla f(U)^T) = ||\Omega||_{fro}^2$.
\end{definition}

\begin{definition}
A point $U_{*} \in \Od$ is \emph{asymptotically stable} with respect
to Algorithm (\ref{alg1}) if it has a neighborhood $\mathcal{V}$ of $U_{*}$ such that
all sequences generated by Algorithm (\ref{alg1}) with starting point
$U_0 \in \mathcal{V}$ converge to $U_{*}$.
\end{definition}

\begin{theorem}{Convergence to local optimum} \\
(1) The sequence of iterates $U_t$ of Algorithm (\ref{alg1}) satisfies:
  $\lim_{t \to \infty} ||\nabla f (U_t) || = 0$. This means that the
  accumulation points of the sequence $\{U_t\}_{t=1}^{\infty} $ are
  critical points of $f$.  \\
(2) Assume the critical points of $f$ are isolated. Let $U_{*}$ be a
critical point of $f$. Then $U_{*}$ is a local minimum of $f$ if and
only if it is asymptotically stable with regard to the sequence
generated by Algorithm (\ref{alg1}).
\end{theorem}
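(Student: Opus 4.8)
The plan is to treat Algorithm (\ref{alg1}) as a random coordinate minimization process on the compact manifold $\Od$ and to mirror the classical analysis of non-convex coordinate descent, adapted to the Riemannian setting. I would first establish the basic descent structure: at iteration $t$, conditioned on the current iterate $U_t$, step 2 performs an exact one-dimensional minimization of the smooth periodic function $\hat f(\theta) = f(U_t G(i(t),j(t),\theta))$, so $f(U_{t+1}) \le f(U_t)$ deterministically, and in expectation over the uniform choice of $(i,j)$ we get $\mathbb{E}[f(U_{t+1}) \mid U_t] \le f(U_t) - c\,\|\nabla f(U_t)\|^2$ for some constant $c>0$. The key quantitative ingredient here is a sufficient-decrease lemma for a single exact line-search along a Givens direction: because $f$ is $C^1$ on a compact set, its restriction to any geodesic $\theta \mapsto f(U\expm(\theta H_{ij}))$ has derivative at $0$ equal to $\nabla_{ij} f(U)$ and a gradient (in $\theta$) that is uniformly continuous, so an exact minimizer decreases $f$ by at least a fixed increasing function of $|\nabla_{ij} f(U)|$; averaging the $\frac{d(d-1)}{2}$ coordinates and using $\|\nabla f(U)\|^2 = \sum_{i<j} \nabla_{ij} f(U)^2$ from the Riemannian-gradient definition gives the claimed expected decrease. (With only $C^1$ and no global Lipschitz-gradient constant assumed, I would phrase the per-step decrease through the modulus of continuity of $\theta\mapsto \frac{d}{d\theta} f(U\expm(\theta H_{ij}))$; this is the one place one must be careful.)

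Given the expected sufficient decrease, part (1) follows by a standard telescoping/supermartingale argument: since $f$ is bounded below on the compact manifold $\Od$, summing $\mathbb{E}[f(U_t) - f(U_{t+1})] \ge c\,\mathbb{E}\|\nabla f(U_t)\|^2$ over $t = 0,\dots,T-1$ yields $\sum_{t=0}^{T-1} \mathbb{E}\|\nabla f(U_t)\|^2 \le \frac{1}{c}\big(f(U_0) - \inf f\big) < \infty$, hence $\mathbb{E}\|\nabla f(U_t)\|^2 \to 0$ and in fact $\min_{t<T}\mathbb{E}\|\nabla f(U_t)\|^2 = O(1/T)$, the advertised rate. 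To upgrade this to the almost-sure statement $\lim_{t\to\infty}\|\nabla f(U_t)\| = 0$ for every sequence, I would use that $\{f(U_t)\}$ is monotone nonincreasing and bounded, hence convergent along every realization; combined with the deterministic per-step decrease bound (which holds for \emph{whichever} coordinate was drawn) and the fact that over any long enough window every coordinate is eventually drawn with probability one, one concludes each coordinate derivative $\nabla_{ij} f(U_t) \to 0$, and since accumulation points exist by compactness of $\Od$, every accumulation point is a critical point.

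For part (2), I would invoke the theory of stability for iterative descent maps (as in \citet{absil2009optimization}, and the classical capture/convergence lemmas in \citet{bertsekas1999nonlinear}), using the assumption that critical points are isolated. The ``local minimum $\Rightarrow$ asymptotically stable'' direction follows because a strict local minimum $U_*$ admits a sublevel-set neighborhood $\mathcal{V} = \{U : f(U) < f(U_*) + \delta\}$ whose connected component containing $U_*$ contains no other critical point; since $f(U_t)$ is nonincreasing the iterates stay in $\mathcal{V}$, their accumulation points are critical (by part (1)) hence equal $U_*$, and isolatedness plus $U_t \to \{$accumulation points$\}$ forces $U_t \to U_*$. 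For the converse, ``asymptotically stable $\Rightarrow$ local minimum'', I would argue contrapositively: if $U_*$ is a critical point that is not a local minimum, then in every neighborhood there is a point with strictly smaller $f$-value, and I would exhibit, with positive probability, a finite sequence of Givens rotations escaping any prescribed neighborhood of $U_*$ — using that at a non-minimum critical point some geodesic direction is a strict descent direction and that the random schedule draws that coordinate infinitely often — contradicting asymptotic stability. The main obstacle I anticipate is exactly this escape argument: making precise that the exact-line-search dynamics cannot be ``trapped'' near a saddle, which requires a careful local analysis of how an exact minimization step behaves when the coordinate derivative is small but the iterate sits on a descent direction of the objective; this is where I would lean most heavily on the stable-manifold / instability lemmas for descent iterations in \citet{absil2009optimization,bertsekas1999nonlinear} rather than re-deriving them.
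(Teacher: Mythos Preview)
Your proposal is correct, but for part~(1) it takes a different route from the paper. The paper's proof is a two-line appeal to the abstract convergence framework of \citet{absil2009optimization}: it observes that the step direction $Z_t$ (equal to $-\nabla f(U_t)$ in the $(i,j)$ and $(j,i)$ entries, zero elsewhere) is \emph{gradient related}, that exact line search decreases $f$ at least as much as an Armijo step, and then invokes Theorem~4.3.1 and Corollary~4.3.2 of that book together with compactness of $\Od$. To make the gradient-related condition hold cleanly, the paper in fact passes to a slightly modified sampling rule (resampling until a coordinate with nonzero derivative is drawn). Your approach instead re-derives a quantitative sufficient-decrease lemma for a single exact line search and telescopes in expectation---this is exactly the machinery the paper deploys for Theorem~2 (the $O(1/T)$ rate, under the extra Lipschitz hypothesis), not for Theorem~1. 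What you gain is a self-contained argument that handles the random schedule directly and yields the rate simultaneously; what you pay is the extra work of upgrading $\sum_t \mathbb{E}\|\nabla f(U_t)\|^2 < \infty$ to the pathwise statement $\|\nabla f(U_t)\| \to 0$, which you correctly flag as needing care (monotone $f(U_t)$, each coordinate drawn infinitely often, continuity of the gradient). For part~(2) your plan and the paper's coincide: both reduce to Theorems~4.4.1 and~4.4.2 of \citet{absil2009optimization}, using monotonicity of $f(U_t)$ and compactness of $\Od$; the escape-from-saddle argument you sketch is precisely what those theorems package.
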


\begin{definition}
For an iterate $t$ of Algorithm (\ref{alg1}), and a set of indices $(i(t),j(t))$, we define the auxiliary single variable function $g_t^{ij}$ :
\begin{equation}
\label{eq:defgtij1}
g_t^{ij}(\theta) =  f\left(U_t \cdot    G(i,j,\theta)\right),
\end{equation}
\end{definition}
Note that $g_t^{ij}$ are differentiable and periodic with a period of $2 \pi$. Since $\Od$ is compact and $f$ is differentiable there exists a single Lipschitz constant $L(f) > 0$ for all $g_t^{ij}$.

\begin{theorem}{Rate of convergence} \\
Let $f$ be a continuous function with $L$-Lipschitz directional derivatives \footnote{Because $\Od$ is compact, any function $f$ with a continuous second-derivative will obey this condition.}. Let $U_t$ be the sequence generated by Algorithm \ref{alg1}. 
For the sequence of Riemannian gradients $\nabla f(U_t) \in T_{U_t} \Od$ we have: 
\begin{equation}
\underset{0 \leq t \leq T}{\operatorname{max}} E \left[ ||\nabla f(U_t)||_2^2 \right] \leq \frac{L\cdot d^2\left( f(U_0) -f_{min} \right)}{T+1} \quad .
\end{equation}
\end{theorem}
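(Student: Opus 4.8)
The statement is the orthogonal‑manifold analogue of the standard randomized coordinate‑descent rate for (possibly) non‑convex objectives, so the plan is: (i) recognize that one iteration of Algorithm~\ref{alg1} is an \emph{exact one‑dimensional line minimization} of the scalar function $g_t^{ij}$ defined just above the theorem; (ii) apply the classical scalar ``descent lemma'' to $g_t^{ij}$ to turn one Givens step into a guaranteed decrease of $f$ proportional to the squared coordinate derivative; (iii) average over the uniform random choice of the pair $(i,j)$ to obtain a decrease proportional to the squared norm of the full Riemannian gradient; and (iv) telescope. For step (i) the key observation is that, since $G(i,j,\theta)=\expm(\theta H_{ij})$, the chain rule together with Eq.~\ref{eq:deriv} gives $(g_t^{ij})'(0)=\nabla_{H_{ij}}f(U_t)=\nabla_{ij}f(U_t)$; that is, the derivative of the scalar function at the origin is exactly the $(i,j)$ Riemannian partial derivative that enters the definition of $\nabla f(U_t)$. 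Likewise $(g_t^{ij})'(\theta)$ is a directional derivative of $f$ at the point $U_tG(i,j,\theta)$, so the theorem's hypothesis gives a single Lipschitz constant $L$ for all the $(g_t^{ij})'$ (uniform in $t$ and in the pair, using compactness of $\Od$, as noted before the theorem).

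\textbf{Per‑step decrease and averaging.} Fix $t$ and the sampled pair $(i,j)=(i(t),j(t))$. Since $(g_t^{ij})'$ is $L$‑Lipschitz, the scalar descent lemma gives $g_t^{ij}(\theta)\le g_t^{ij}(0)+\theta\,(g_t^{ij})'(0)+\tfrac{L}{2}\theta^2$ for all $\theta$; minimizing the right‑hand side over $\theta$ and using that step~2 of Algorithm~\ref{alg1} performs exact minimization of $g_t^{ij}$ yields
$$ f(U_{t+1}) \;=\; \min_{\theta} g_t^{ij}(\theta) \;\le\; f(U_t) \;-\; \frac{1}{2L}\bigl(\nabla_{ij}f(U_t)\bigr)^2 . $$
Now take the conditional expectation over the uniform choice of $(i,j)$ among the $\tfrac{d(d-1)}{2}$ admissible pairs, and use that by the definition of the Riemannian gradient $\sum_{i<j}\bigl(\nabla_{ij}f(U_t)\bigr)^2=\tfrac12\|\Omega\|_{fro}^2=\tfrac12\|\nabla f(U_t)\|^2$:
$$ E\bigl[f(U_{t+1}) \,\big|\, U_t\bigr] \;\le\; f(U_t) \;-\; \frac{\|\nabla f(U_t)\|^2}{2L\,d(d-1)} . $$

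\textbf{Telescoping and conclusion.} Rearranging, taking total expectation, and summing over $t=0,\dots,T$, the right‑hand sides telescope against $f(U_0)-E[f(U_{T+1})]\le f(U_0)-f_{min}$ (finite because $\Od$ is compact and $f$ continuous), so $\sum_{t=0}^{T} E\|\nabla f(U_t)\|^2 \le 2L\,d(d-1)\,(f(U_0)-f_{min})$, and dividing by $T+1$ bounds the best (equivalently, the average) iterate up to time $T$, giving the claimed $O(1/(T+1))$ rate with a constant of order $L\,d^2\,(f(U_0)-f_{min})$. I do not expect a genuinely hard step here: the argument is routine once the correspondences from the previous sections are in place. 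The two points that need care are (a) justifying rigorously that a Givens step \emph{is} exact minimization of $g_t^{ij}$ and that $(g_t^{ij})'(0)=\nabla_{ij}f$ — this is exactly where the geodesic/$\expm$/Givens identification is used — and (b) the bookkeeping of constants when passing from a single coordinate to the full gradient (the factor $2$ coming from skew‑symmetry, the $\tfrac{d(d-1)}{2}$ normalization from the uniform sampling, and the uniform‑in‑$t$ Lipschitz constant).
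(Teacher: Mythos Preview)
Your proposal is correct and follows essentially the same route as the paper's proof: descent lemma on the one‑variable function $g_t^{ij}$ to obtain $f(U_t)-f(U_{t+1})\ge \frac{1}{2L}(\nabla_{ij}f(U_t))^2$, expectation over the uniformly sampled pair to recover the full Riemannian gradient norm, then telescoping. Your bookkeeping of constants is in fact slightly more careful than the paper's (you obtain $2L\,d(d-1)$ where the paper writes $L\,d^2$), and you correctly note that the conclusion bounds the \emph{best} (or average) iterate, which matches the paper's actual proof even though the theorem statement displays $\max$.
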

 
The proof is a Riemannian version of the proof for the rate of
convergence of Euclidean random coordinate descent for non-convex
functions \citep{patrascu2013efficient} and is provided as
supplemental material.

\section{Sparse PCA}
\label{sec:spca}
Principal component analysis (PCA) is a basic dimensionality reducing
technique used throughout the sciences. Given a data set $A \in
\reals^{d \times n}$ of $n$ observations in $d$ dimensions, the
principal components are a set of orthogonal vectors $z_1, z_2, \ldots
, z_m \in \reals^d$, such that the variance $\sum_{i=1}^m
z_i^T A A^T z_i$ is maximized. The data is then represented in a new
coordinate system $\hat{A} = Z^T A$ where $Z = [z_1, z_2, \ldots, z_m]
\in \reals^{d \times m}$. 

One drawback of ordinary PCA is lack of interpretability. In the
original data $A$, each dimension usually has an understandable
meaning, such as the level of expression of a certain gene. The
dimensions of $\hat{A}$ however are typically linear combinations of
all gene expression levels, and as such are much more difficult to
interpret. A common approach to the problem of finding
\emph{interpretable} principal components is Sparse PCA \citep{zou2006sparse, journee2010generalized,d2007direct,zhang2012sparse,zhang2012large}.
SPCA aims to find vectors $z_i$ as in PCA,
but which are also sparse. In the gene-expression example, the non-zero
components of $z_i$ might correspond to a few genes that explain well
the structure of the data $A$.

One of the most popular approaches for solving the problem of finding
sparse principal components is the work by
\citet{journee2010generalized}. In their paper, they formalize the
problem as finding the optimum of the following constrained
optimization problem to find the sparse basis vectors $Z$:
\begin{eqnarray}
\label{eq:spca}
\underset{U \in \reals^{n \times m}, Z \in \reals^{d \times m} }{\operatorname{argmax}} Tr(Z^T A U) - \gamma \sum_{ij} |Z_{ij}| \\ 
s.t. \; \; U^T U = I_m , \sum_{i=1}^d Z_{ij}^2 = 1 \; \forall j=1 \ldots m  \nonumber \quad.
\end{eqnarray}

Journ{\'e}e et al. provide an algorithm to solve Eq. \ref{eq:spca}
that has two parts: The first and more time consuming part finds an
optimal $U$, from which optimal $Z$ is then found.  We focus here on
the problem of finding the matrix $U$. Note that when $m = n$, the
constraint $ U^T U = I_m $ implies that $U$ is an orthogonal matrix.

We use a second formulation of the optimization problem, also given by
Journ{\'e}e et al. in section 2.5.1 of their paper:
\begin{equation}
  \label{eq:spca2}
  \begin{aligned}
    \underset{U \in \reals^{n \times m }}{\operatorname{argmax}} &\sum_{j=1}^m \sum_{i=1}^d     [|(A \cdot U)_{ij}| - \gamma]_{+}^2 \\ 
    &s.t. \; \; U^T U = I_m , \nonumber
  \end{aligned}
\end{equation}
where $n$ is the number of samples, $d$ is the input dimensionality
and $m$ is output dimension (the number of PCA component
computed). This objective is once-differentiable and the objective
matrix $U$ grows with the number of samples $n$.

\subsection{Givens rotation algorithm for the full case $m = n$}
If we choose the number of principal components $m$ to be equal to the
number of samples $n$ we can apply Algorithm ((\ref{alg1})) directly to
solve the optimization problem of Eq. \ref{eq:spca2}.  Explicitly, at
each round $t$, for choice of coordinates $(i,j)$ and a matrix $U_t
\in \Od$, the resulting coordinate minimization problem is:
\begin{equation}
\begin{aligned}
\label{eq:spcatheta}
&\underset{\theta}{\operatorname{argmin}} -\sum_{j=1}^m \sum_{i=1}^d    [|(AU_tG(i,j,\theta))_{ij}| - \gamma]_{+}^2 = \\
&\underset{\theta}{\operatorname{argmin}} -\sum_{k=1}^d [|cos(\theta) (AU_t)_{ki} + sin(\theta) (AU_t)_{kj}| - \gamma]_{+}^2 +  \\
& \quad \quad \quad \quad \quad [|-sin(\theta) (AU_t)_{ki} + cos(\theta) (AU_t)_{kj}| - \gamma]_{+}^2
\end{aligned}
\end{equation}

 \begin{algorithm}
\caption{Riemannian coordinate minimization for sparse PCA}
\label{alg2}
\begin{algorithmic}
\REQUIRE Data matrix $A \in \reals^{d \times n}$, initial matrix $U_0 \in \On$,
sparsity parameter $\gamma \geq 0$
\STATE $t=0$
\STATE $AU = A \cdot U_0$ .
\WHILE{not converged}
\STATE 1. Sample uniformly at random a pair $(i(t),j(t))$ such that $1 \leq i(t) < j(t) \leq n$.
\STATE 2. $\theta_{t+1} = \underset{\theta }{\operatorname{argmax}}$ \\ $\sum_{k=1}^d ([|cos(\theta) (AU)_{ki(t)} + sin(\theta) (AU)_{kj(t)}| - \gamma]_{+}^2$ \\ $+  [|-sin(\theta) (AU)_{ki(t)} + cos(\theta) (AU)_{kj(t)}| - \gamma]_{+}^2 )$.
\STATE 3.$AU = AU \cdot G(i(t),j(t)),\theta_{t+1})$.
\STATE 4. $t = t+1$.
\ENDWHILE
\STATE 5. $Z = solveForZ(AU,\gamma)$  // Algorithm 6 of \\  \quad \citet{journee2010generalized}.
\ENSURE $Z \in \reals^{d \times n}$
\end{algorithmic}
\end{algorithm}

See Algorithm (\ref{alg2}) for the full procedure. In practice, there
is no need to store the matrices $U_t$ in memory, and one can work
directly with the matrix $A U_t$.  Evaluating the above expression
\ref{eq:spcatheta} for a given $\theta$ requires $O(d)$ operations,
where $d$ is the dimension of the data instances. We found in practice
that optimizing Eq. \ref{eq:spcatheta} required an order of 5-10 evaluations. Overall each
iteration of Algorithm (\ref{alg2}) requires $O(d)$ operations.

\subsection{Givens rotation algorithm for the case $m < n$}
 The major drawback of Algorithm (\ref{alg2}) is that it requires the
 number of principal components $m$ to be equal to the number of
 samples $n$.  This kind of ``full dimensional sparse PCA'' may not be
 necessary when researchers are interested to obtain a small number of
 components. We therefore develop a streaming version of Algorithm
 (\ref{alg2}). For a small given $m$, we treat the data as if only $m$
 samples exist at any time, giving an intermediate model $AU \in
 \reals^{d \times m}$.  After a few rounds of optimizing over this
 subset of samples, we use a heuristic to drop one of the previous
 samples and incorporate a new sample. This gives us a streaming
 version of the algorithm because in every phase we need only $m$
 samples of the data in memory. The full details of the algorithm are
 given in the supplemental material.
 

\subsection{Experiments}

Sparse PCA attempts to trade-off two variables: the fraction of data
variance that is explained by the model's components, and the level of
sparsity of the components.  In our experiment, we monitor a third
important parameter, the number of floating point operations (FLOPS)
performed to achieve a certain solution. To compute the number of
FLOPS we counted the number of additions and multiplications computed
on each iteration. This does not include pointer arithmetic.

We first examined Algorithm \ref{alg2} for the case where $m=n$.
We used the prostate cancer gene expression data by \citet{singh2002gene}.
This dataset consists of the gene expression levels for 52 tumor and 50 normal samples over
12,600 genes, resulting in a $12,600 \times 102$ data matrix.

We compared the performance of our approach with that of the {\em
Generalized Power Method} of \citet{journee2010generalized}. We focus
on this method for comparisons because both methods optimize the same
objective function, which allows to characterize the relative
strengths and weaknesses of the two approaches.

As can be seen in Figure \ref{fig:spca_full}, the Givens coordinate minimization method
finds a sparser solution with better explained variance, and does so faster than the generalized power method.

\begin{figure}
\begin{center}

\centerline{\includegraphics[width=0.5\linewidth]{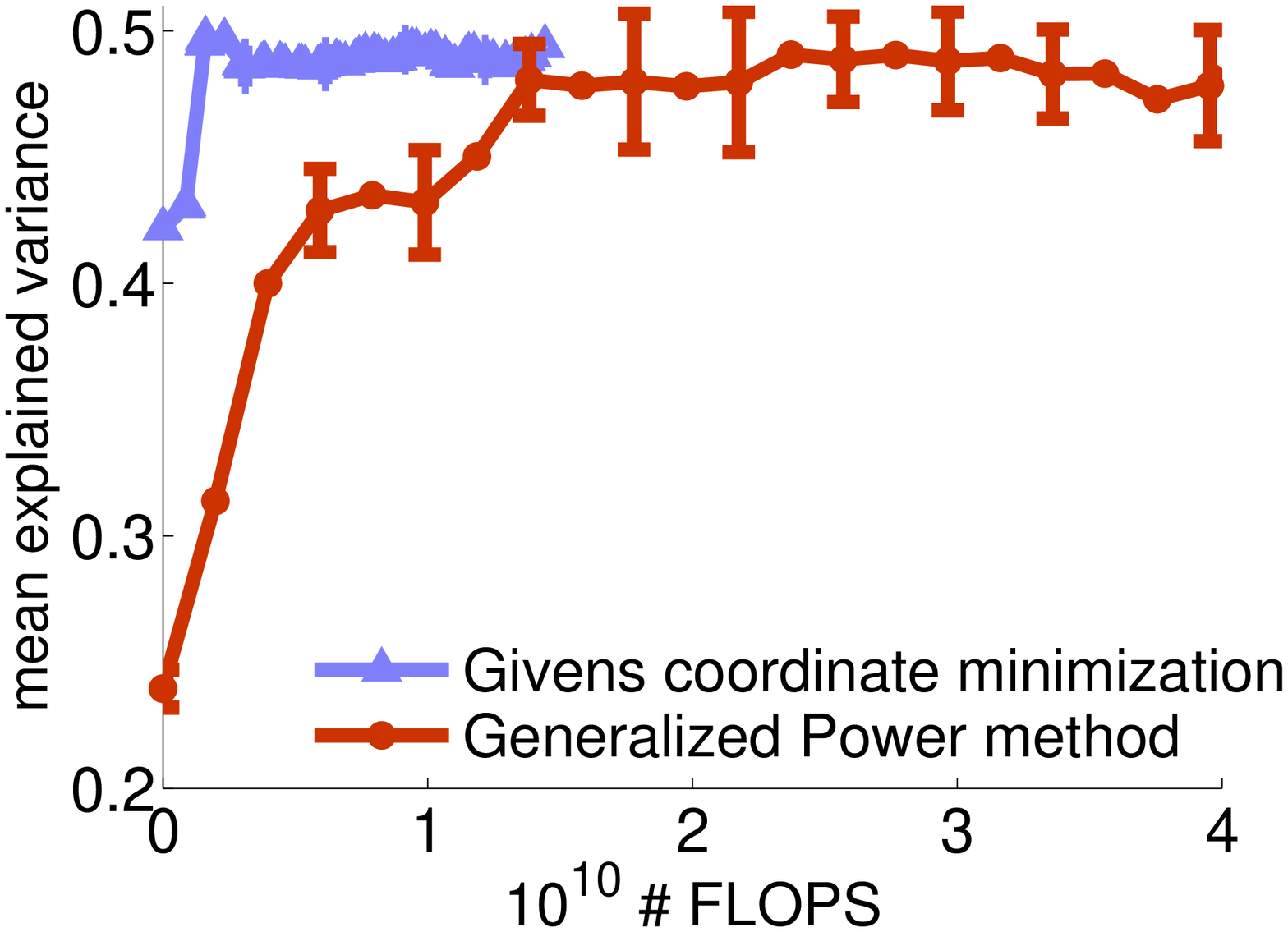} \includegraphics[width=0.5\linewidth]{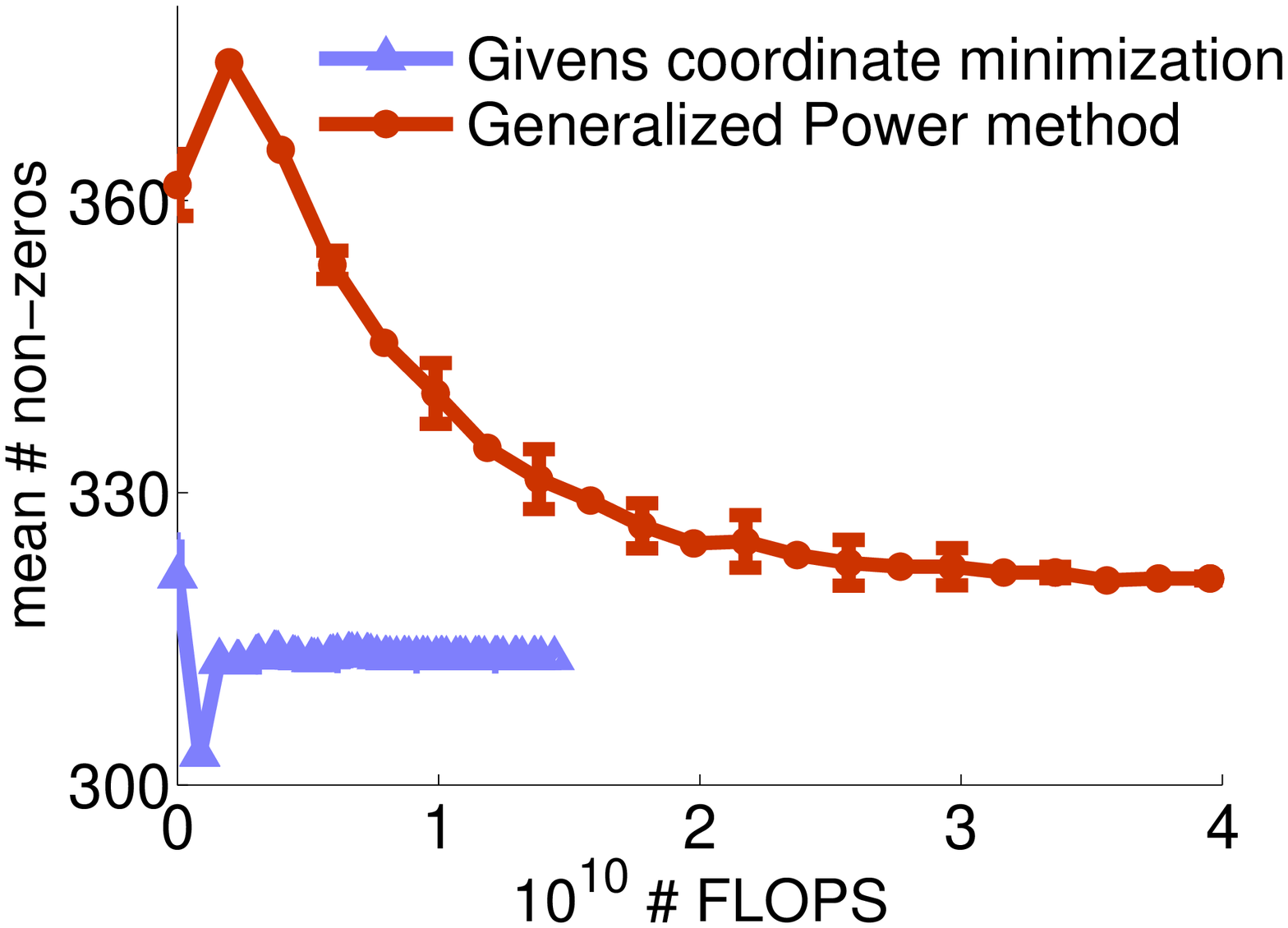} }
\centerline{{ (a) explained variance  } \hskip 0.3in { (b) number of non-zeros }}
\begin{small}\caption{(a) The explained variance as function of FLOPS of the coordinate minimization method from Algorithm \ref{alg2} and 
of the generalized power method by \citet{journee2010generalized}, on a prostate cancer gene expression dataset. (b) The number of non-zeros
in the sparse PCA matrix as function of FLOPS of the coordinate minimization method from Algorithm \ref{alg2} and 
of the generalized power method by \citet{journee2010generalized}, on a prostate cancer gene expression dataset. The size of the sparse PCA matrix is $12,600 \times 102$.} \label{fig:spca_full}\end{small}

\end{center}
\vskip -0.05in
\end{figure}

We tested the streaming version of the coordinate descent algorithm
for sparse PCA (Algorithm 5, supp. material) on a recent large gene
expression data set collected from of six human brains
\citep{hawrylycz2012anatomically}. Overall, each of the 20K human genes
was measured at 3702 different brain locations, and this data can be
used to study the spatial patterns of mRNA expression across the human
brain.

We again compared the performance of our approach with that of the {\em
Generalized Power Method} of \citet{journee2010generalized}. 

We split the data into 5 train/test partitions, with each train set
including 2962 examples and each test set including 740 examples. We
evaluated the amount of variance explained by the model on the test
set. We use the adjusted variance procedure suggested in this case by
\citet{zou2006sparse}, which takes into account the fact that the
sparse principal components are not orthogonal.

For the Generalized Power Method we use the greedy $l_1$ version of
\citet{journee2010generalized}, with the parameter $\mu$ set to 1. We
found the greedy version to be more stable and to be able to produce
sparse solutions when the number of components was $m>1$.  We used
values of $\gamma$ ranging from $0.01$ to $0.2$, and two stopping
conditions: ``convergence'', where the algorithm was run until its
objective converged within a relative tolerance level of $10^{-4}$,
and ``early stop'' where we stopped the algorithm after 14\% of the
iterations required for convergence.

For our algorithm we used the same range of $\gamma$ values, and used
an ``early stop'' condition where the algorithm was stopped after using
14\% of the samples.

Figure \ref{fig:scatter_all} demonstrates the tradeoff between floating
point operations and explained variance for sparse PCA with 3, 5
and 10 components and with 3 sparsity levels: 5\%, 10\% and 20\%. Using low
dimensions is often useful for visual exploration of the data.  Each
dot represents one instance of the algorithm that was run with a
certain value of $\gamma$ and stopping criterion. To avoid clutter we
only show instances which performed best in terms of explained
variance or few FLOPS.  

When strong sparsity is required (5\% or 10\% sparsity), the
givens-rotation coordinate descent algorithm finds solutions faster
(blue rectangles are more to the left in Figure
\ref{fig:scatter_all}), and these solutions are similar or better in
terms of explained variance.

For low-dimensional less sparse solutions (20\% sparsity) we find that
the generalized power method finds comparable or better solutions
using the same computational cost, but only when the number of
components is small, as seen in Figure \ref{fig:scatter_all}.c,f,i.

%

\begin{figure*}[!t]
  \vskip -0.07in
  \centering
  \begin{center}
    \centerline{{ (a) max. sparsity 5\% } \hskip 0.35in { (d)  max. sparsity 5\%  } \hskip 0.35in { (g)  max. sparsity 5\%  }}
    \centerline{
      \includegraphics[width=0.25\linewidth]{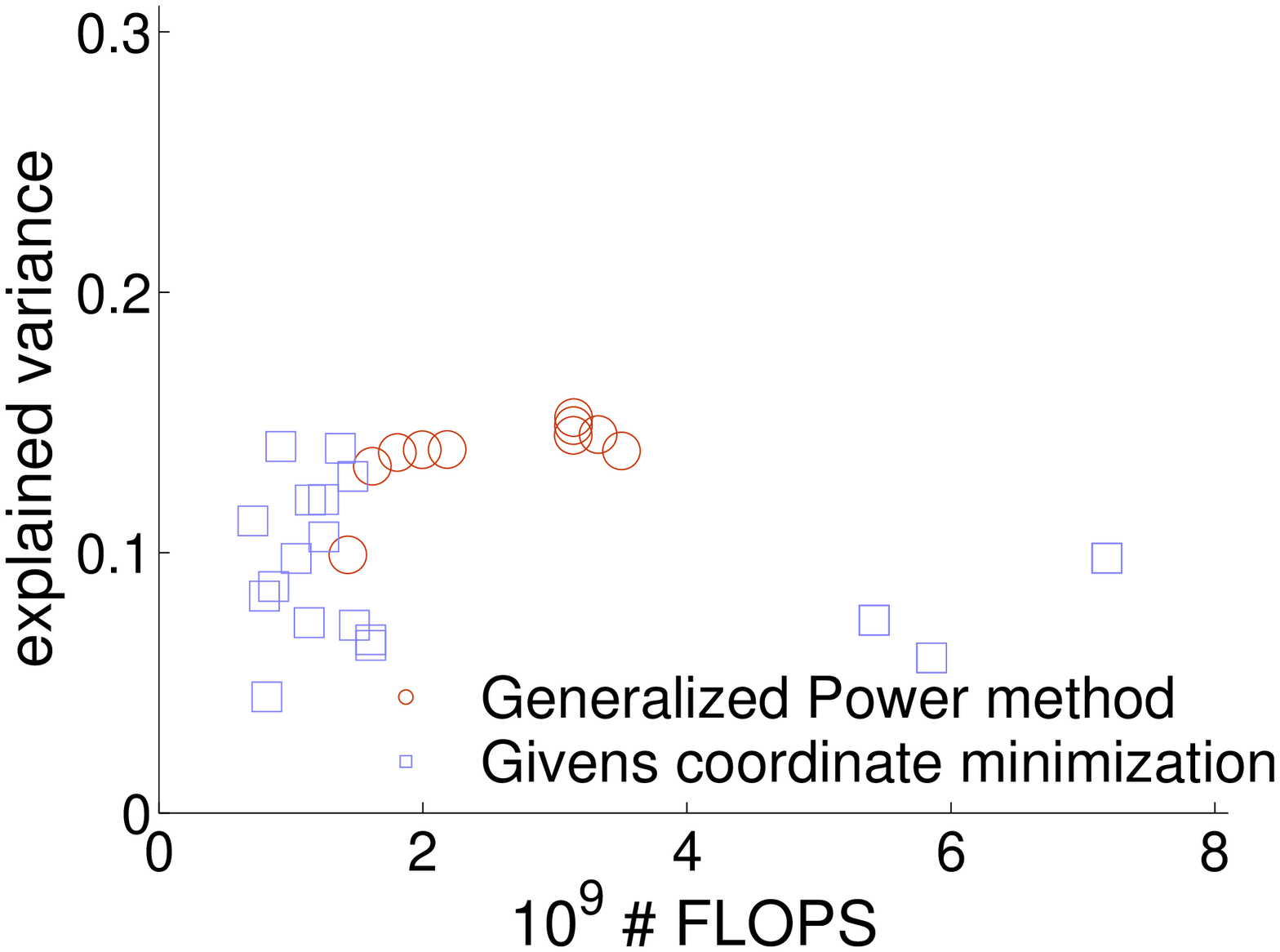}  
      \includegraphics[width=0.25\linewidth]{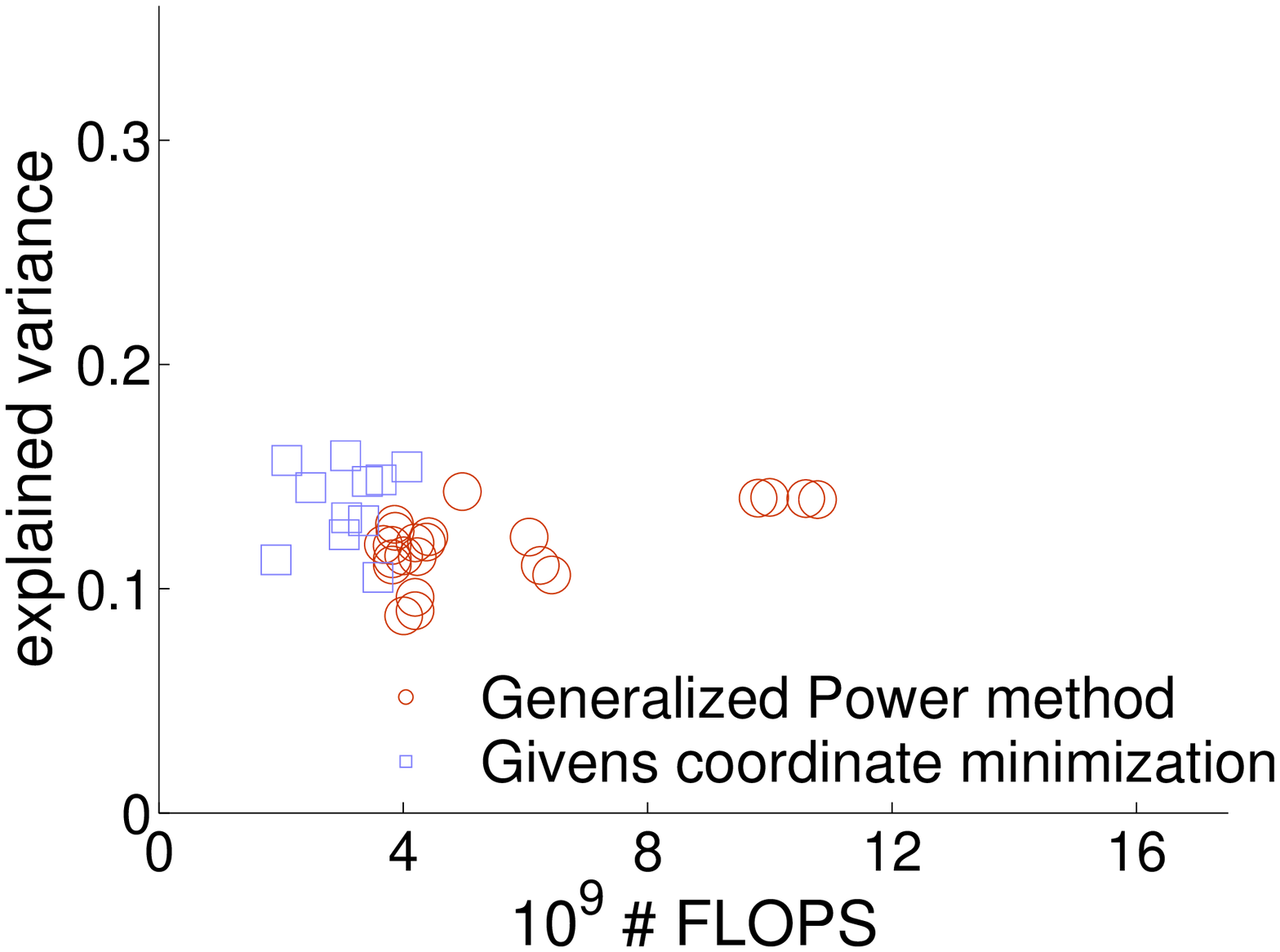} 
      \includegraphics[width=0.25\linewidth]{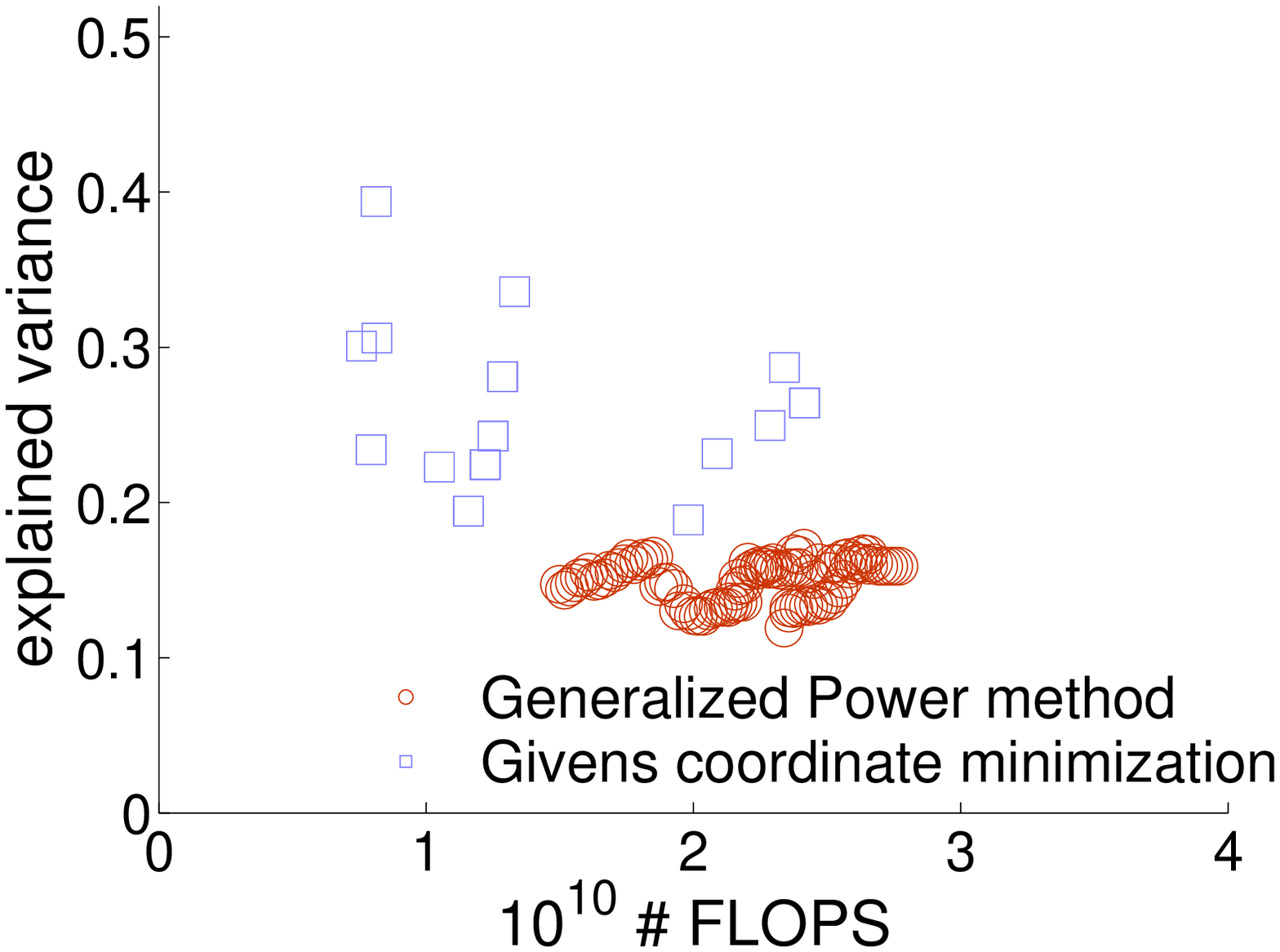}
    }
    \centerline{{ (b) max. sparsity 10\% } \hskip 0.35in { (e) max. sparsity 10\% } \hskip 0.35in { (h) max. sparsity 10\% }}
    
    \centerline{
      \includegraphics[width=0.25\linewidth]{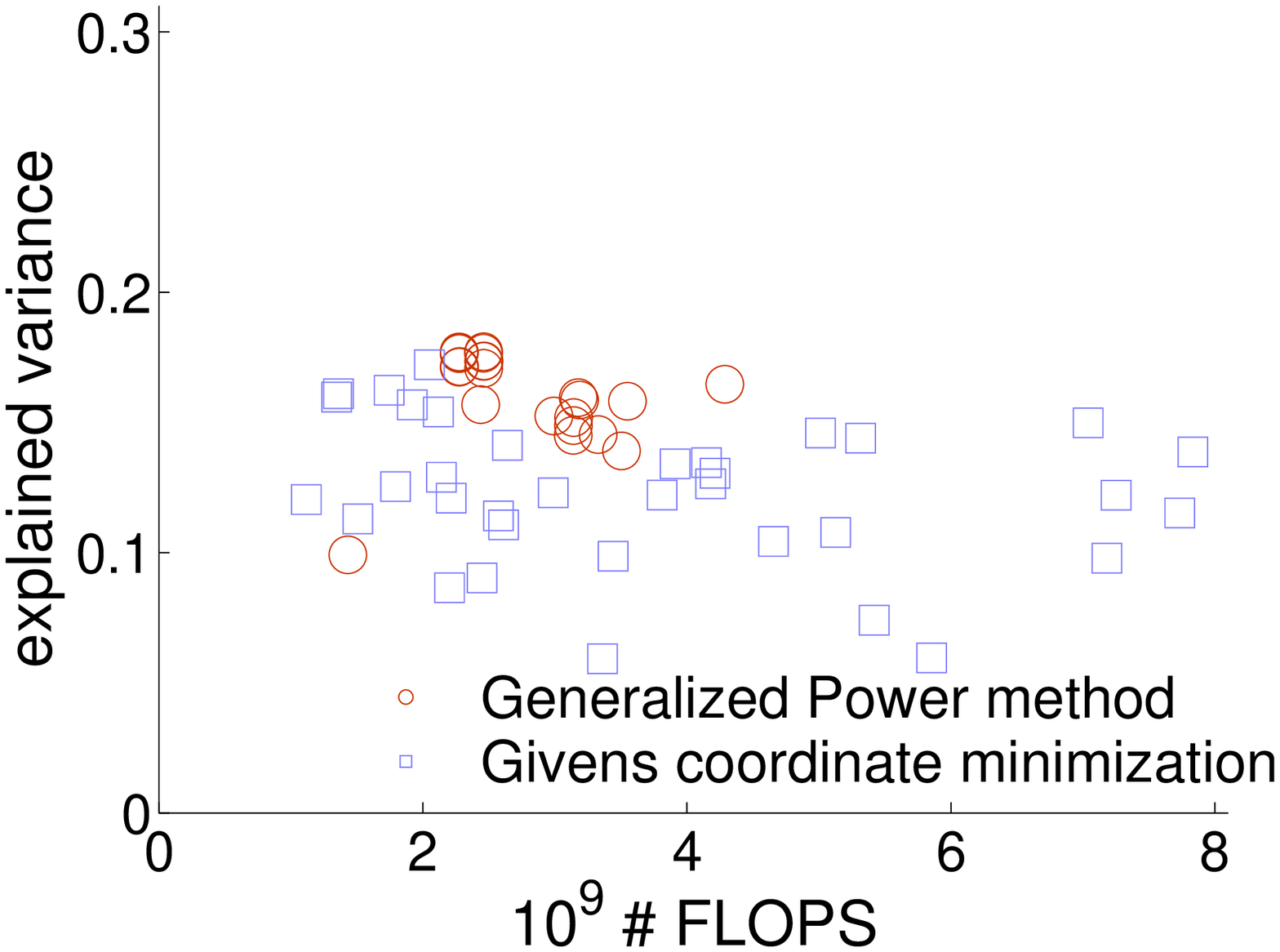}  
      \includegraphics[width=0.25\linewidth]{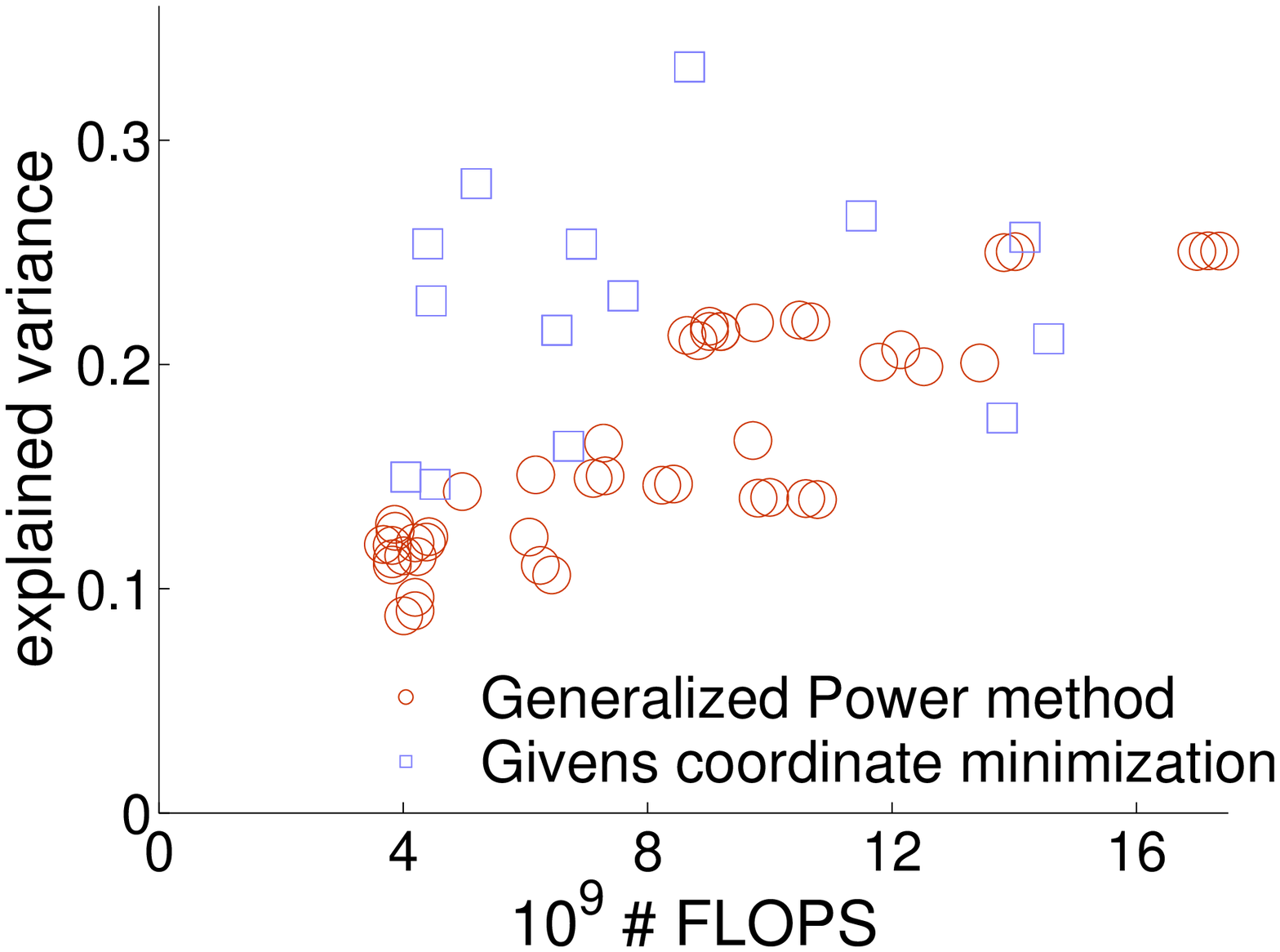} 
      \includegraphics[width=0.25\linewidth]{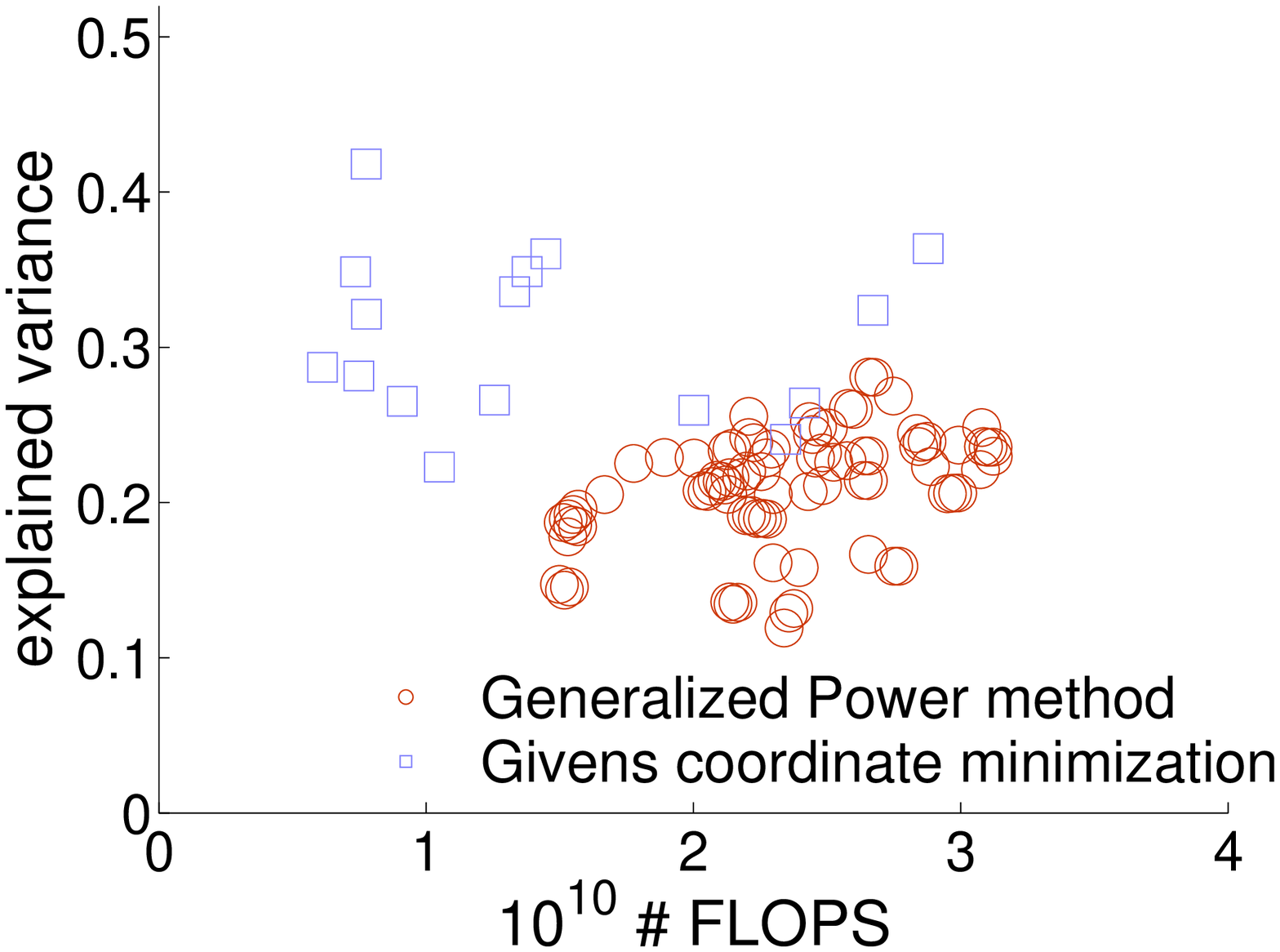}
    }
    \centerline{{ (c) max. sparsity 20\% } \hskip 0.35in { (f) max. sparsity 20\% } \hskip 0.35in { (i) max. sparsity 20\% }}
    
    \centerline{
      \includegraphics[width=0.25\linewidth]{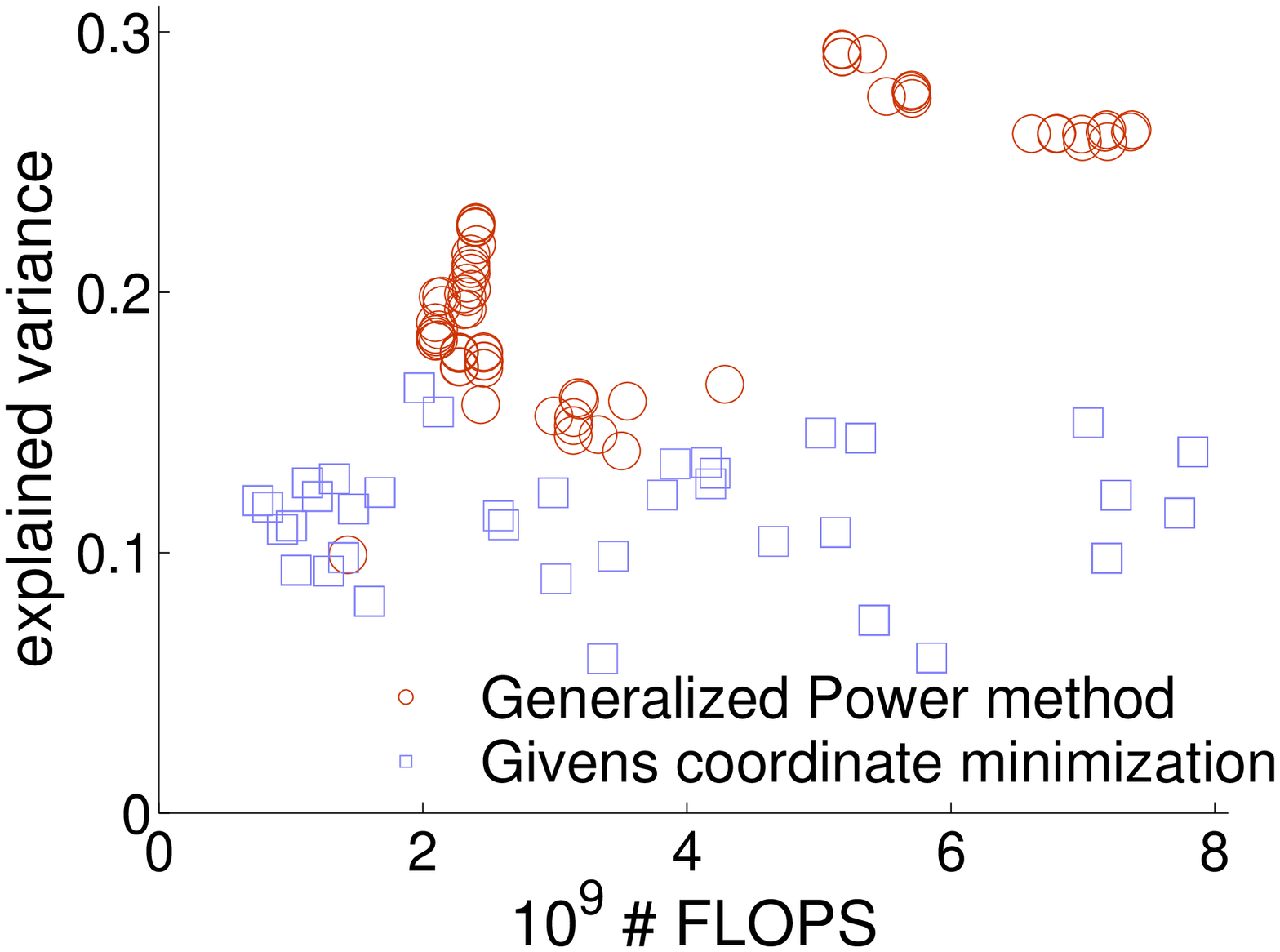} 
      \includegraphics[width=0.25\linewidth]{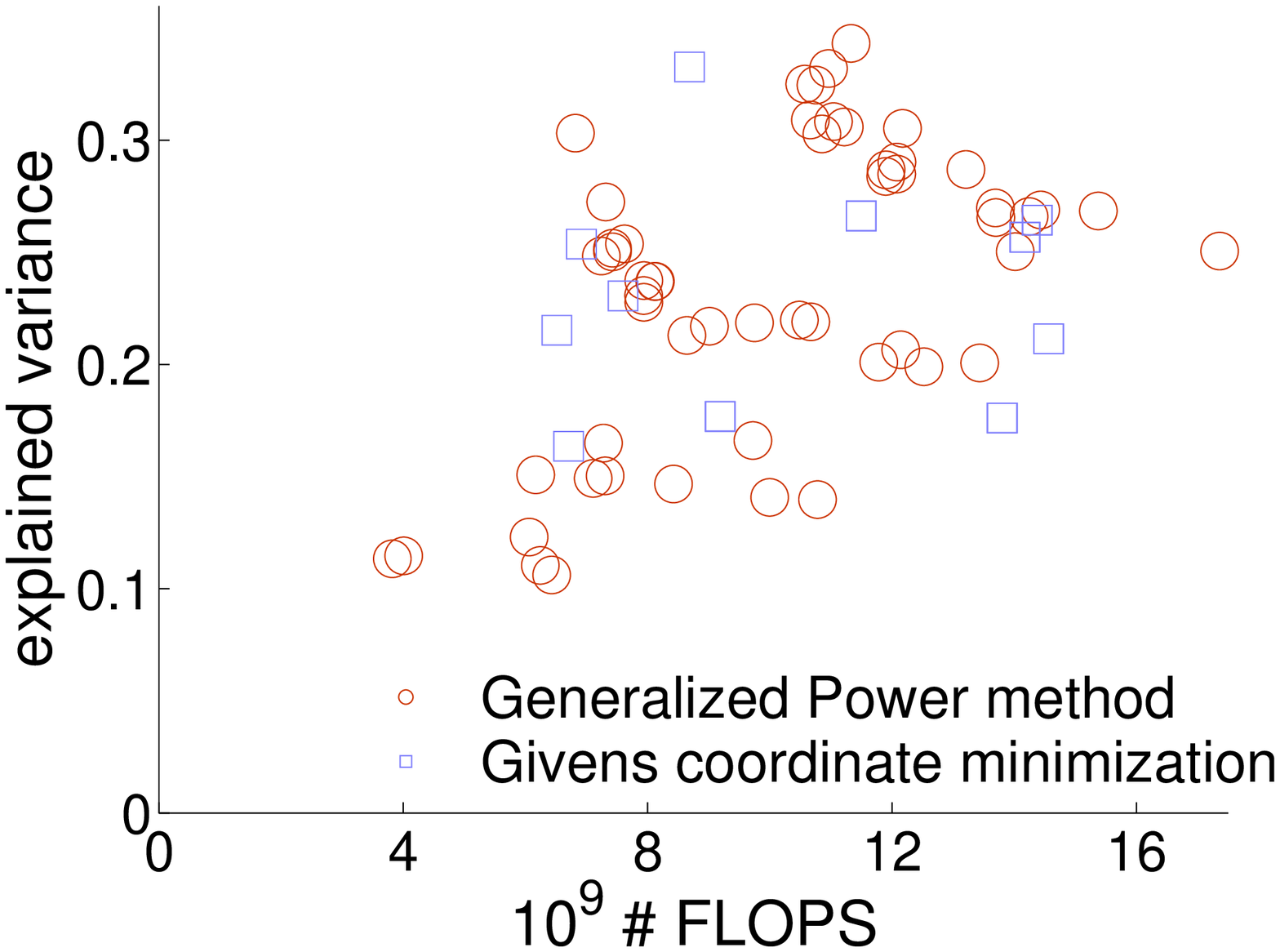}  
      \includegraphics[width=0.25\linewidth]{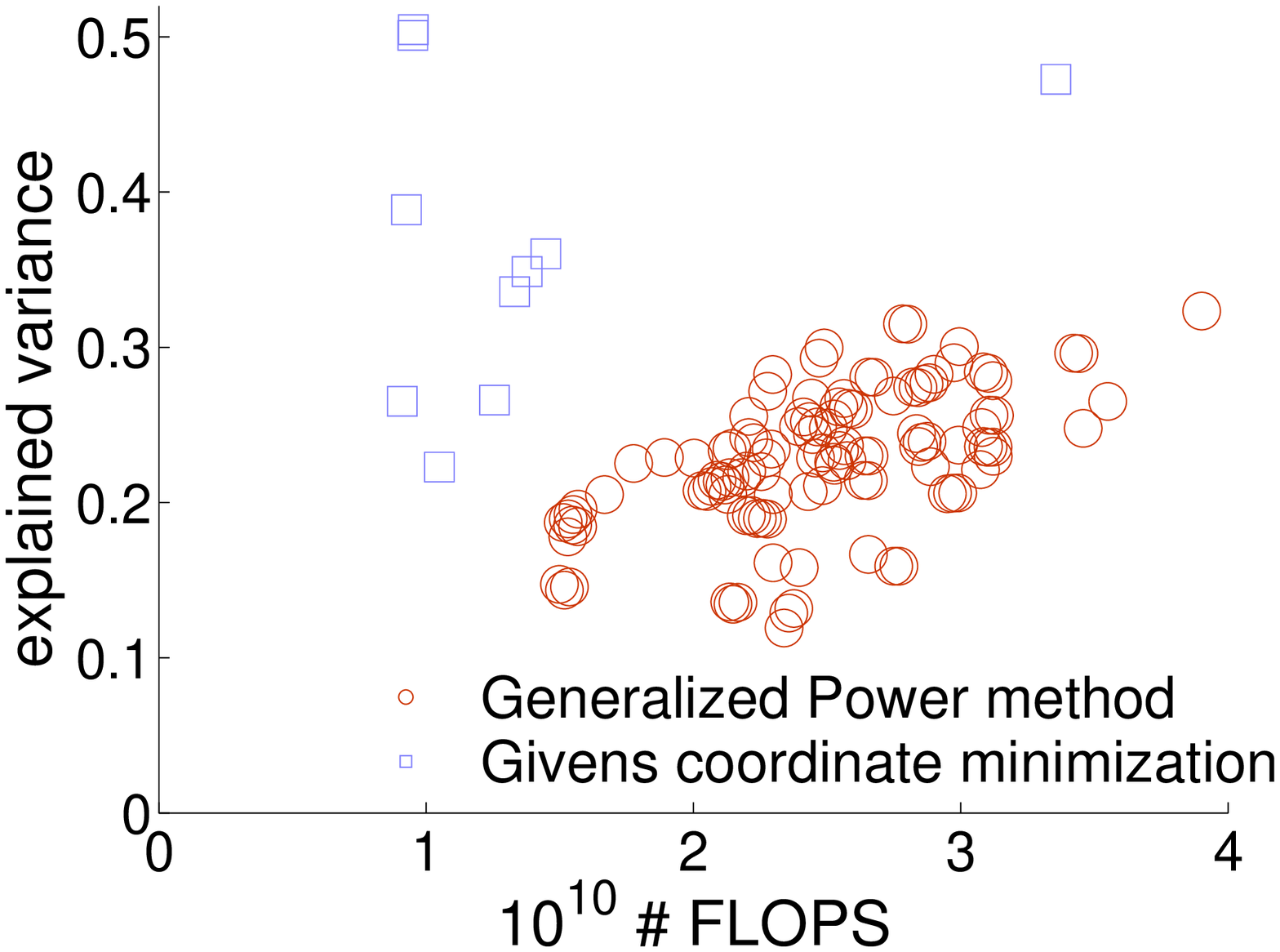} 
    }
    \centerline{{ 3 components } \hskip 0.8in {  5 components }  \hskip 0.8in {  10 components }}
    \begin{small}
      \caption{The tradeoff between explained variance and
        computational cost for 3, 5 and 10-component sparse-PCA models
        applied to Human gene expression data.  The models are
        constrained for maximum sparsity of 5\% (a), (d) \& (f), 10\%
        (b), (e) \& (h) and 20\% (c), (f) \& (i).  Red circles are
        instances of the Generalized Power method
        \citet{journee2010generalized}; Blue squares represent the
        Givens coordinate descent procedure. Both algorithms were run
        with $\gamma$ in the range [0.01,...,0.2] and two stopping
        criteria ('early-stop' and 'convergence'). The points
        presented are chosen for best performance in terms of
        computational cost or explained variance. Explained variance
        was adjusted following \citet{zou2006sparse} }
      \label{fig:scatter_all}
    \end{small}
  \end{center}
\end{figure*} 
\vskip -0.05in

\section{Orthogonal tensor decomposition}
\label{sec:tensor}
Recently it has been shown that many classic machine learning problem
such as Gaussian Mixture Models and Latent Dirichlet Allocation can be
solved efficiently by using 3rd order moments
\citep{anandkumar2012tensor,hsu2013learning,anandkumar2012spectral,anandkumar2012method,chaganty2013spectral}. These methods ultimately
rely on finding an orthogonal decomposition of 3-way tensors $T \in
\reals^{d\times d \times d}$, and reconstructing the solution from the
decomposition.  In this section, we show that the problem of finding
an orthogonal decomposition for a tensor $T \in \reals^{d\times d
\times d}$ can be naturally cast as a problem of optimization over the
orthogonal matrix manifold. We then apply Algorithm (\ref{alg1}) to this
problem, and compare its performance on a task of finding a Gaussian
Mixture Model with a state-of-the-art tensor
decomposition method, namely the robust Tensor Power Method
\citep{anandkumar2012tensor}. We find that the Givens coordinate
minimization method consistently finds better solutions when the
number of mixture components is large.

\subsection{Orthogonal tensor decomposition}
The problem of tensor decomposition is very hard in general
\citep{kolda2009tensor}. However, a certain class of tensors known as
``orthogonally decomposable'' tensors are easier to decompose, as has
been demonstrated recently by \citet{anandkumar2012tensor,
hsu2013learning} and others.  In this section, we introduce the problem
of orthogonal tensor decomposition, and provide a new characterization
of the solution to the tensor-decomposition problem as the solution of
an optimization problem on the orthogonal matrix manifold.

The resulting algorithm is similar to one recently proposed by \citet{ishteva2013jacobi}.
However, we aim for full diagonalization, while they focus on finding a good low-rank approximation.
This results in different objective functions: ours involves third-order polynomials on $\Od$, while Ishteva et al.'s
results in sixth-order polynomials on the low-rank compact Stiefel manifold. 
Diagonalizing the tensor $T$ is attainable in our case thanks to the strong assumption that it is orthogonally decomposable.
Nonetheless, both methods are extensions of Jacobi's eigenvalue algorithm to the tensor case, in different setups.

We start with preliminary notations and definitions.  We focus
here on symmetric tensors $T \in \mathds{R}^{d \times d \times d}$.  A
third-order tensor is symmetric if its values are identical for any
permutation $\sigma$ of the indices: with $T_{i_1 i_2 i_3} =
T_{i_{\sigma(1)} i_{\sigma(2)} i_{\sigma(3)}}$.

We also view a tensor $T$ as a trilinear map. \\ $T : \reals^d
 \times \reals^d \times \reals^d \rightarrow \reals$: $T(v_1,v_2,v_3)
 = \sum_{a,b,c = 1}^d T_{abc} v_{1a} v_{2b} v_{3c}$.

Finally, we also use the three-form tensor product of a vector $u \in
\reals^d$ with itself: $u \otimes u \otimes u \in \reals^{d \times d
\times d}$, $(u \otimes u \otimes u)_{abc} = u_a \cdot u_b \cdot
u_c$. Such a tensor is called a \textit{rank-one} tensor.

Let $T \in \reals^{d \times d \times d}$ be a symmetric tensor.
\begin{definition}
\label{def:orth_dec}
A tensor $T$ is \textit{orthogonally decomposable} if there exists an
orthonormal set of vectors $v_1, \ldots v_d \in \reals^d$, and positive scalars $\lambda_1,
\ldots \lambda_d >0$ such that:
\begin{equation}
\label{eq:orth_dec}
T = \sum_{i=1}^d \lambda_i (v_i \otimes v_i \otimes v_i).
\end{equation}
\end{definition}

Unlike matrices, most symmetric tensors are not orthogonally
decomposable. However, as shown by \citet{anandkumar2012tensor,
hsu2013learning,anandkumar2013tensor}, several problems of interest,
notably Gaussian Mixture Models and Latent Dirichlet Allocation do
give rise to third-order moments which are orthogonally decomposable
in the limit of infinite data.

The goal of orthogonal tensor decomposition is, given an orthogonally
decomposable tensor $T$, to find the orthogonal vector set $v_1,
\ldots v_d \in \reals^d$ and the scalars $\lambda_1, \ldots \lambda_d
>0$.

We now show that finding an orthogonal decomposition can be stated as
an optimization problem over $\Od$:

\begin{theorem}
  \label{th:unq}
  Let $T \in R^{d \times d \times d}$ have an orthogonal decomposition
  as in Definition \ref{def:orth_dec}, and consider the optimization
  problem
  \begin{equation}
    \label{eq:obj}
    \underset{U \in \Od}{\operatorname{max}} f(U) = \sum_{i=1}^d T(u_i,u_i,u_i),
  \end{equation}
  where $U = [u_1 \, u_2 \, \ldots \, u_d]$.  The stable stationary
  points of the problem are exactly orthogonal matrices $U$ such that
  $u_i = v_{\pi(i)}$ for a permutation $\pi$ on $[d]$. The maximum
  value they attain is $\sum_{i=1}^d \lambda_i$.
\end{theorem}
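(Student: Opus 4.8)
The plan is to move to a canonical coordinate system, derive a global upper bound on $f$ that pins down its maximizers exactly, and then use the convergence theorem of Section~\ref{sec:conv} to translate the claim about \emph{stable} stationary points of Algorithm~\ref{alg1} into a claim about the (coordinate-wise) maximizers of $f$. First I would reduce to the case $v_i=e_i$: writing $V=[v_1\,\cdots\,v_d]\in\Od$ and substituting $W=V^T U$, the map $U\mapsto W$ is a bijection of $\Od$ that commutes with right-multiplication by any Givens rotation $G(i,j,\theta)$, and
\[
  f(U)=\sum_{i=1}^d\sum_{j=1}^d \lambda_j (v_j^T u_i)^3=\sum_{j=1}^d \lambda_j \sum_{i=1}^d W_{ji}^3 .
\]
Hence this change of variables preserves the entire dynamics of Algorithm~\ref{alg1}, as well as stationarity and asymptotic stability, and it suffices to treat $T=\sum_i\lambda_i\,e_i\otimes e_i\otimes e_i$, i.e.\ $f(U)=\sum_{i,k}\lambda_i U_{ik}^3$ on $\Od$.

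For the global bound: for each row $i$ of an orthogonal $U$ we have $|U_{ik}|\le 1$, so $U_{ik}^3\le|U_{ik}|^3\le U_{ik}^2$ and $\sum_k U_{ik}^3\le\sum_k U_{ik}^2=1$; since every $\lambda_i>0$, summing yields $f(U)\le\sum_i\lambda_i$, with equality forcing $U_{ik}\in\{0,1\}$ for all $i,k$ — i.e.\ $U$ is a permutation matrix, which in the original basis reads $u_i=v_{\pi(i)}$, attaining $\sum_i\lambda_i$. These are therefore the global maximizers of $f$, in particular isolated local maxima, so the exact-coordinate-minimization step of Algorithm~\ref{alg1} cannot leave such a point and nearby iterates are pulled toward it; applying part~(2) of the convergence theorem to $-f$, each of them is asymptotically stable, i.e.\ a stable stationary point (within the connected component fixed by $\det U_0$).

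For the converse I must show that nothing else is a stable stationary point. A stable stationary point $U$ is in particular a fixed point of Algorithm~\ref{alg1}, so for every pair $(k,l)$ the current matrix already \emph{globally} maximizes $\theta\mapsto f\!\left(U\,G(k,l,\theta)\right)=g(u_k(\theta))+g(u_l(\theta))+\mathrm{const}$, where $g(x)=\sum_i\lambda_i x_i^3$ and $u_k(\theta),u_l(\theta)$ are the rotated columns. Writing the projection of $e_i$ onto $\mathrm{span}(u_k,u_l)$ in polar form as $r_i(\cos\phi_i,\sin\phi_i)$, this restricted objective becomes the cubic trigonometric polynomial $F(\theta)=\sum_i\lambda_i r_i^3\!\left[\cos^3(\theta-\phi_i)-\sin^3(\theta-\phi_i)\right]$, and coordinate-wise optimality says $\theta=0$ is its global maximizer. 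The plan is to show, by analyzing $F$, that this condition for \emph{all} pairs simultaneously forces every column $u_k$ to be a standard basis vector with no sign flip, i.e.\ $U$ a permutation matrix: if some column has two or more nonzero entries one exhibits a pair $(k,l)$ and an angle $\theta^*$ with $F(\theta^*)>F(0)$ (a ``sharpening'' rotation that concentrates a spread-out column onto a single coordinate), contradicting the fixed-point property; and a sign-flipped column $u_k=-e_a$ is escaped by a large ($90^\circ$-type) rotation against another column, which again strictly increases $f$, so it cannot persist at a fixed point — or, in the borderline configurations where it does, a small perturbation lets Algorithm~\ref{alg1} escape, so the point is not asymptotically stable.

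The main obstacle is precisely this last step: showing that global maximality of the cubic trigonometric polynomial $F(\theta)$ at $\theta=0$, enforced over all $\binom d2$ coordinate pairs at once, forces the angles $\{\phi_i\}$ — equivalently the columns of $U$ — into the permutation configuration. This needs a careful case analysis over the possible column structures of a candidate fixed point, a genuine one-variable optimization of $F$, and a perturbation/instability argument for the sign-flipped borderline cases. The reduction and the global bound are routine by comparison; it is here that the orthogonal decomposability of $T$ is really used.
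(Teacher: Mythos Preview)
Your reduction to $V=I_d$ and the row-wise bound $U_{ik}^3\le U_{ik}^2$ (hence $\sum_k U_{ik}^3\le 1$, with equality forcing $U_{ik}\in\{0,1\}$) are correct and give the maximum value $\sum_i\lambda_i$ together with the characterization of the maximizers as permutation matrices. The paper reaches the same conclusion by a genuinely different route: it writes $f(U)=\langle\mathrm{vec}(T),\mathrm{vec}(\hat{T}(U))\rangle$ with $\hat{T}(U)=\sum_i u_i\otimes u_i\otimes u_i$, proves the two norm bounds $\|\hat{T}(U)\|_\infty\le 1$ and $\|\mathrm{vec}(\hat{T}(U))\|_2^2=d$, relaxes to an inner-product maximization over all tensors satisfying these two constraints, reads off that the optimal relaxed $\hat{T}$ is the diagonal all-ones tensor, and then invokes Kruskal's uniqueness theorem for tensor decompositions to conclude that the only $U\in\Od$ realizing this $\hat{T}$ are permutations. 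Your argument is shorter and self-contained; the paper's argument is heavier but makes explicit that the uniqueness of the maximizer is the tensor-rigidity phenomenon (absent in the matrix case) that underlies the whole method.

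On the converse --- that no \emph{other} stationary point is stable --- the paper's proof actually stops at identifying the global maximizers and does not carry out the analysis you sketch; it silently identifies ``stable stationary points'' with maximizers. Your plan to rule out non-permutation fixed points by examining the one-parameter objective $F(\theta)$ along each Givens direction, and to destabilize sign-flipped columns by perturbation, therefore goes beyond what the paper proves. The obstacle you flag is real, but you are not missing an argument the paper supplies; if anything you are attempting to close a gap the paper leaves open.
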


The proof is given in the supplemental material.

\subsection{Coordinate minimization algorithm for orthogonal tensor decomposition}

We now adapt Algorithm (\ref{alg1}) for solving the problem of
orthogonal tensor decomposition of a tensor $T$, by minimizing the
objective function \ref{eq:obj}, $f(U) = - \sum_{i=1}^d
T(u_i,u_i,u_i)$.  For this we need to calculate the form of the
function $f\left(U \cdot G(i,j,\theta)\right)$.  Define $\tilde{u}_i =
cos(\theta)u_i + sin(\theta) u_j$ and $\tilde{u}_j = cos(\theta)u_j -
sin(\theta) u_i$.
\begin{align*}
& f\left(U \cdot 	G(i,j,\theta)\right) = \sum_{k \neq i,j}^d T(u_k,u_k,u_k) + \\
& T\left(\tilde{u}_i , \; \tilde{u}_i ,\; \tilde{u}_i \right) +  
 T\left(\tilde{u}_j, \; \tilde{u}_j,\; \tilde{u}_j\right).
\end{align*}
Define:
\begin{equation}
\label{eq:defgtij2}
g_t^{ij}(\theta) =  f\left(U \cdot 	G(i,j,\theta)\right),
\end{equation}
and denote by $\tilde{T}$ the tensor such that:
\begin{equation}
\label{eq:Tbar}
\tilde{T}_{ijk} = T(u_i,u_j,u_k).
\end{equation}

Collecting terms, using the symmetry of $T$ and some basic trigonometric identities, we then have:
\begin{align}
\label{eq:gtij3}
g_t^{ij}(\theta) = 
 &cos^3(\theta)\left(\tilde{T}_{iii}+\tilde{T}_{jjj} - 3\tilde{T}_{ijj} - 3\tilde{T}_{jii}\right) \\ \nonumber 
+&sin^3(\theta)\left(\tilde{T}_{iii}-\tilde{T}_{jjj} -3\tilde{T}_{ijj} +3\tilde{T}_{jii}\right)  \\ \nonumber
+&cos(\theta)\left(3\tilde{T}_{ijj}+3\tilde{T}_{jii}\right) \\ \nonumber
+&sin(\theta)\left(3\tilde{T}_{ijj} -3\tilde{T}_{jii} \right).
\end{align}

In each step of the algorithm, we maximize $g_t^{ij}(\theta)$ over
$-\pi \leq \theta < \pi$. The function $g_t^{ij}$ has at most 3 maxima
that can be obtained in closed form solution, and thus $g_t^{ij}$ can
be maximized in constant time.

\begin{algorithm}
  \caption{Riemannian coordinate maximization for orthogonal tensor decomposition}
  \label{alg:tens}
  \begin{algorithmic}
    \REQUIRE Symmetric tensor $T \in \mathds{R}^{d \times d \times d}$, initial matrix $U_0 \in \Od$
    
    \STATE $t=0$
    \WHILE{not converged}
    \STATE 1. Sample uniformly at random a pair $(i(t),j(t))$ such that $1 \leq i(t) < j(t) \leq d$.
    \STATE 2. Calculate $\tilde{T}_{iii}$, $\tilde{T}_{jjj}$, $\tilde{T}_{ijj}$, $\tilde{T}_{jii}$ as in \ref{eq:Tbar}.
    \STATE 3. $\theta_t = \underset{\theta }{\operatorname{argmax}} \; g_t^{ij}(\theta)$, where $g_t^{ij}$ is defined as in \ref{eq:gtij3}.
    \STATE 4. $U_{t+1} = U_t G(i,j,\theta_t)$.
    \STATE 5. $t = t+1$.
    \ENDWHILE
  \end{algorithmic}
\end{algorithm}
The most computationally intensive part of Algorithm \ref{alg:tens} is
line 2, naively requiring $O(d^3)$ operations. This can be improved to
$O(d^2)$ per iteration, with a one-time precomputation of $O(d^4)$
operations, by maintaining an auxiliary tensor in memory. The more
efficient algorithm is not described due to space constraints. We will
make the code available online.

\subsection{Experiments}

\citet{hsu2013learning} and \citet{anandkumar2012tensor} have recently
shown how the task of fitting a Gaussian Mixture Model (GMM) with
common spherical covariance can be reduced to the task of orthogonally
decomposing a third moment tensor.  We evaluate the Givens coordinate
minimization algorithm using this task. We compare with a state of the
art tensor decomposition method, the robust tensor power method, as
given in \citet{anandkumar2012tensor}.

\begin{figure}
  \vskip -0.1in
  \begin{center}
    \centerline{\includegraphics[width=0.5\linewidth]{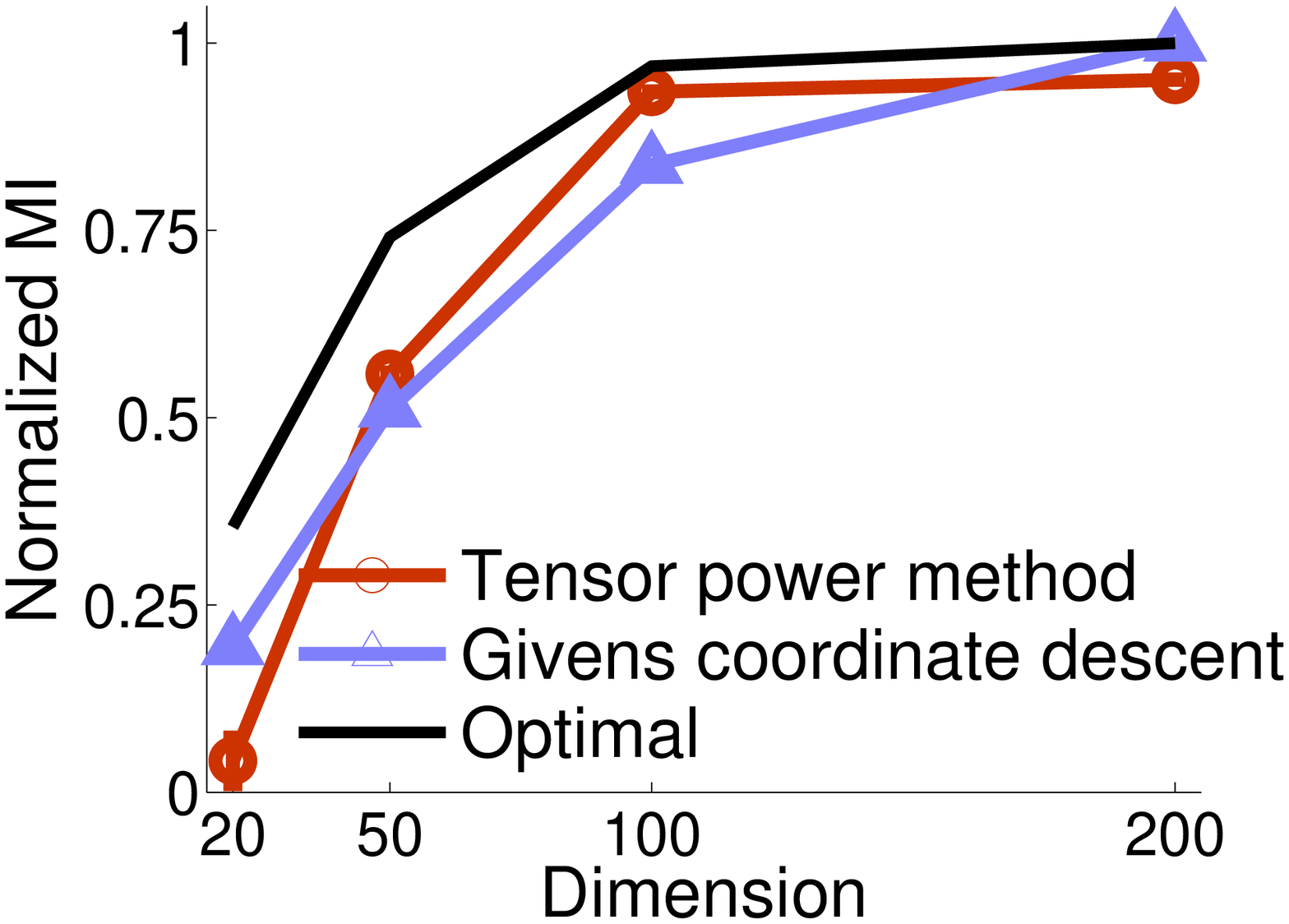}  \includegraphics[width=0.5\linewidth]{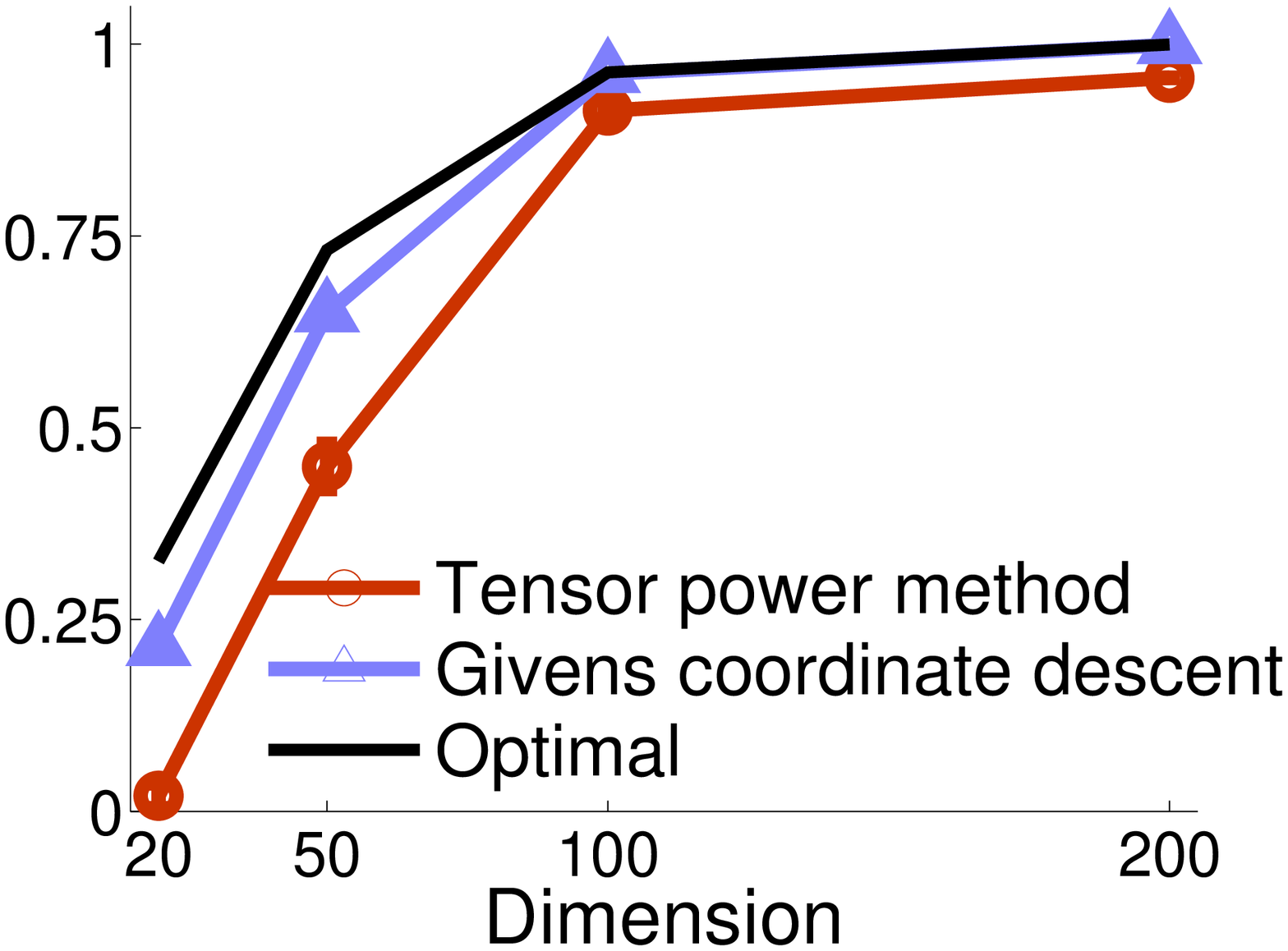}}
    \centerline{{ (a) 10,000 samples } \hskip 0.3in { \; \;(b)  200,000 samples  }}
    \begin{small}
      \caption{Clustering performance in terms of normalized MI of the
      Givens coordinate algorithm vs. the tensor power method of
      \citet{anandkumar2012tensor}. Clustering by fitting a GMM from
      samples drawn from a 20-component GMM with varying dimension,
      using 3rd order moments. The reconstruction is performed from
      (a) 10K samples and (b) 200K samples.  Blue line with circles
      marks the Givens coordinate minimization method. Red line with
      triangles marks the tensor power method, and the black line is
      the optimal performance if all the GMM parameters are
      known. }\label{fig:tensor1}
    \end{small}
  \end{center}
\end{figure} 
\vskip -0.1in

We generated GMMs with the following parameters: number of dimensions
in $\{10,\, 20,\, 50,\, 100,\, 200\}$, number of samples sampled from
the model in $\{10K, 30K, 50K, 100K, 200K\}$. We used $20$ components,
each with a spherical variance of $2$. The centers were sampled from a
Gaussian distribution with an inverse-Wishart distributed covariance
matrix.  Given the samples, we then constructed the third order
moment, decomposed it, and reconstructed the model following the
procedure outlined in \citet{anandkumar2012tensor}. We then clustered
the samples according to the reconstructed model, and measured the
{\em normalized mutual information} (NMI)
\citep{manning2008introduction} between the learned clustering and the
true clusters.

Figure \ref{fig:tensor1} compares the performance of the two methods
with the optimal NMI across dimensions. The coordinate minimization
method outperforms the tensor power method for the large sample size
(200K), whereas for small sample size (10K) the tensor power method
performs better for the intermediate dimensions. In Figure
\ref{fig:tensor2} we see the performance of both algorithms across all
sample sizes for dimension $= 100$. We see that the coordinate
minimization method again performs better for larger sample sizes. We
observed this phenomenon for 50 components as well, and for mixture
models with larger variance.

\begin{figure}
  \vskip -0.07in
  \begin{center}
    \centerline{\includegraphics[width=0.92\linewidth]{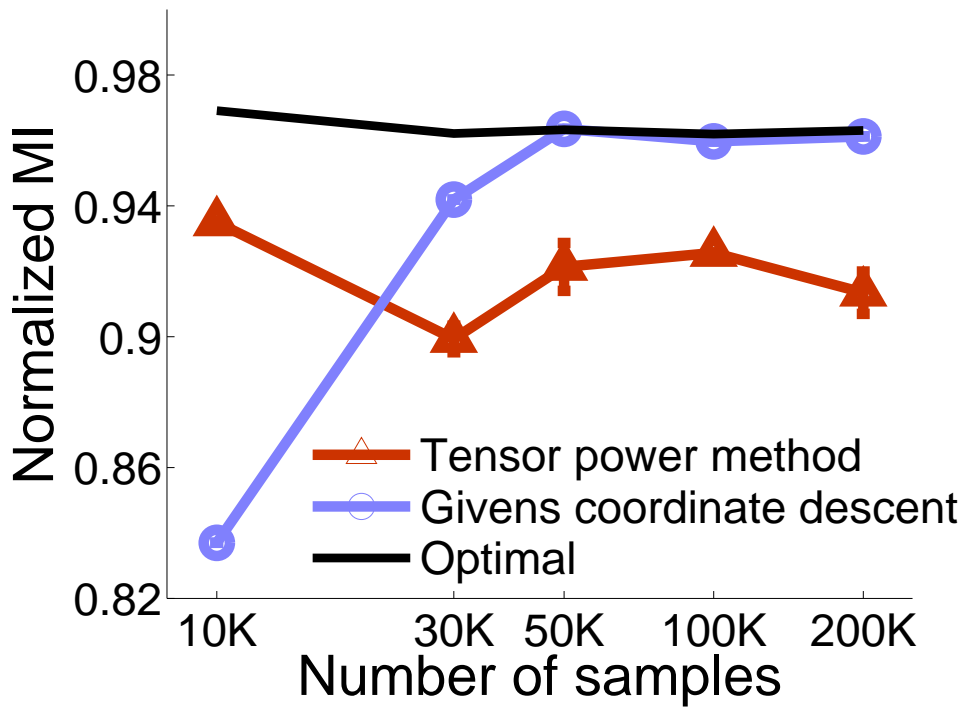}}
    \begin{small}
      \caption{Same task as Figure \ref{fig:tensor1}, but for fixed
      dimension $d = 100$ and varying number of
      samples. }\label{fig:tensor2}
    \end{small}
  \end{center}
  \vskip -0.01in
\end{figure} 
\vskip -0.05in


\section{Conclusion}
We described a framework to efficiently optimize 
differentiable functions over the manifold of orthogonal matrices.
The approach is based on Givens rotations, which we show can be 
viewed as the parallel of coordinate updates in Euclidean spaces. 
We prove the procedure's convergence to a local optimum.

Using this framework, we developed algorithms for two  
unsupervised learning problems. First, finding sparse principal 
components; and second, learning a Gaussian mixture model 
through orthogonal tensor decomposition.

We expect that the framework can be further extended to other problems 
requiring learning over orthogonal matrices including ICA. 
Moreover, coordinate descent approaches have some inherent advantages 
and are sometimes better amenable to parallelization. 
Developing distributed Givens-rotation algorithms would be an 
interesting future research direction.



\bibliographystyle{icml2014}

\bibliography{givens_arxiv1}

\appendix
\newenvironment{appTheorem}[2][Theorem]{\begin{trivlist}
\item[\hskip \labelsep {\bfseries #1}\hskip \labelsep {\bfseries #2}]}{\end{trivlist}}

\newenvironment{appDefinition}[2][Definition]{\begin{trivlist}
\item[\hskip \labelsep {\bfseries #1}\hskip \labelsep {\bfseries #2}]}{\end{trivlist}}

\section{Proofs of theorems of section 3}

Below we use a slightly modified definition of Algorithm \ref{alg1}. The difference lies only in the sampling procedure, and
is essentially a technical difference to ensure that each coordinate step indeed improves the objective or lies at an optimum, so that the proofs could be stated more succinctly.

\begin{algorithm}
\begin{algorithmic}
\REQUIRE Differentiable objective function $f$, initial matrix $U_0 \in \Od$
\STATE $t=0$
\WHILE{not converged}
\STATE 1. Sample coordinate pairs $(i(t),j(t))$ such that $1 \leq i(t) < j(t) \leq d$ uniformly at random without replacement, until the objective function can improve
\STATE 2. $U_{t+1} = \underset{\theta }{\operatorname{argmin}} \; f\left(U_t \cdot G(i,j,\theta)\right)$.
\STATE 3. $t = t+1$.
\ENDWHILE
\end{algorithmic}
\caption{Riemannian coordinate minimization on $\Od$, sampling variant }
\label{alg1APP}
\end{algorithm}

\begin{definition}
A point $U_{*} \in \Od$ is \emph{asymptotically stable} with respect
to Algorithm \ref{alg1APP} if it has a neighborhood $\mathcal{V}$ of $U_{*}$ such that
all sequences generated by Algorithm \ref{alg1APP} with starting point
$U_0 \in \mathcal{V}$ converge to $U_{*}$.
\end{definition}

\begin{appTheorem}{1.}{Convergence to local optimum} \\
(1) The sequence of iterates $U_t$ of Algorithm \ref{alg1APP} satisfies:
  $\lim_{t \to \infty} ||\nabla f (U_t) || = 0$. This means that the
  accumulation points of the sequence $\{U_t\}_{t=1}^{\infty} $ are
  critical points of $f$.  \\
(2) Assume the critical points of $f$ are isolated. Let $U_{*}$ be a
critical point of $f$. Then $U_{*}$ is a local minimum of $f$ if and
only if it is asymptotically stable with regard to the sequence
generated by Algorithm \ref{alg1APP}.
\end{appTheorem}

\begin{proof}
(1)
Algorithm \ref{alg1APP} is obtained by taking a step in each iteration $t$ in the direction of the tangent vector $Z_t$, such that for the coordinates $(i(t),j(t))$ we have $(Z_t)_{ij} = -(\nabla f(U_t))_{ij}$, $(Z_t)_{ji} = -(\nabla f(U_t))_{ji}$ , and $(Z_t)_{kl} = 0$ for all other coordinates $(k,l)$. 

  The sequence of tangent vectors $Z_t \in T_{U_t} \Od$ is easily seen to be gradient related: $\limsup{k\rightarrow \infty} \langle \nabla f (U_t), Z_t \rangle < 0$ \footnote{To obtain a rigorous proof we slightly complicated the sampling procedure in line 1 of Algorithm \ref{alg1}, such that coordinates with 0 gradient are not resampled until a non-zero gradient is sampled.}. This follows from $Z_t$ being equal to exactly two coordinates of $\nabla f(U_t)$, with all other coordinates being 0.

Using the optimal step size as we do assures at least as large an increase $f(U_t) - f(U_{t+1})$ as using the Armijo step size rule \citep{armijo1966minimization,bertsekas1999nonlinear}. Using the fact that the manifold $\Od$ is compact, we obtain by theorem 4.3.1 and corrolary 4.3.2 of \citet{absil2009optimization} that $\lim_{t \to \infty} ||\nabla f (U_t) ||  = 0$

(2) Since Algorithm \ref{alg1APP} produces a monotonically decreasing sequence $f(U_t)$, and since the manifold $\Od$ is compact, we are in the conditions of Theorems 4.4.1 and 4.4.2 of \citet{absil2009optimization}. These imply that the only critical points which are local minima are asymptotically stable.

\end{proof}

We now provide a rate of convergence proof. This proof is a Riemannian version of the proof for the rate of convergence of Euclidean random coordinate descent for non-convex functions given by \citet{patrascu2013efficient}. 

\begin{definition}
For an iterate $t$ of Algorithm \ref{alg1APP}, and a set of indices $(i(t),j(t))$, we define the auxiliary single variable function $g_t^{ij}$ :
\begin{equation}
\label{eq:defgtij}
g_t^{ij}(\theta) =  f\left(U_t \cdot 	G(i,j,\theta)\right),
\end{equation}
\end{definition}
Note that $g_t^{ij}$ are differentiable and periodic with a period of $2 \pi$. Since $\Od$ is compact and $f$ is differentiable there exists a single Lipschitz constant $L(f) > 0$ for all $g_t^{ij}$.

\begin{appTheorem}{2.}{Rate of convergence}\label{appth:rate} \\
Let $f$ be a continuous function with $L$-Lipschitz directional derivatives \footnote{Because $\Od$ is compact, any function $f$ with a continuous second-derivative will obey this condition.}. Let $U_t$ be the sequence generated by Algorithm \ref{alg1APP}. 
For the sequence of Riemannian gradients $\nabla f(U_t) \in T_{U_t} \Od$ we have: 
\begin{equation}
\underset{0 \leq t \leq T}{\operatorname{max}} E \left[ ||\nabla f(U_t)||_2^2 \right] \leq \frac{L\cdot d^2\left( f(U_0) -f_{min} \right)}{T+1} \quad .
\end{equation}
\end{appTheorem}
 

\begin{lemma}
\label{lm:boundedAPP}
Let $g: \reals \rightarrow \reals$ be a periodic differentiable function, with period $2 \pi$, and $L-$Lipschitz derivative $g'$. Then there for all $\theta \in [-\pi \; \pi]$:
$g(\theta) \leq g(0) + \theta g'(0) + \frac{L}{2} \theta^2$.
\end{lemma}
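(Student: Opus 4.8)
The plan is to obtain the quadratic upper bound from the fundamental theorem of calculus together with the Lipschitz property of $g'$ --- the classical ``descent lemma'' argument. Since $g'$ is $L$-Lipschitz it is in particular continuous, so $g$ is $C^1$ and for every $\theta$ we may write $g(\theta) - g(0) = \int_0^\theta g'(s)\,\mathrm{d}s$. Subtracting $\theta g'(0) = \int_0^\theta g'(0)\,\mathrm{d}s$ gives the identity
\[
g(\theta) - g(0) - \theta g'(0) = \int_0^\theta \bigl( g'(s) - g'(0) \bigr)\,\mathrm{d}s .
\]

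Next I would bound the integrand pointwise using the Lipschitz property, $|g'(s) - g'(0)| \le L|s|$, and then integrate. For $\theta \ge 0$,
\[
\int_0^\theta \bigl( g'(s) - g'(0) \bigr)\,\mathrm{d}s \le \int_0^\theta |g'(s) - g'(0)|\,\mathrm{d}s \le \int_0^\theta L s\,\mathrm{d}s = \tfrac{L}{2}\theta^2 ,
\]
and for $\theta < 0$ the same estimate applies after rewriting the integral as $-\int_\theta^0 \bigl(g'(s)-g'(0)\bigr)\,\mathrm{d}s$ and using $|s| = -s$ on $[\theta,0]$, which again yields an upper bound of $\tfrac{L}{2}\theta^2$. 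Combining with the displayed identity gives $g(\theta) - g(0) - \theta g'(0) \le \tfrac{L}{2}\theta^2$, which is the claim.

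There is essentially no real obstacle here; the only points requiring a line of care are the sign bookkeeping when $\theta < 0$ and the remark that a Lipschitz derivative is continuous so that the fundamental theorem of calculus applies. Note that neither periodicity nor the restriction $\theta \in [-\pi,\pi]$ is actually used in the inequality itself: they appear because in the application $g = g_t^{ij}$ is $2\pi$-periodic with $[-\pi,\pi]$ its natural search interval, and because compactness of $\Od$ is what furnishes a single Lipschitz constant $L$ valid simultaneously for all the functions $g_t^{ij}$.
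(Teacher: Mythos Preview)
Your proof is correct and follows essentially the same route as the paper: use the fundamental theorem of calculus to write $g(\theta)-g(0)-\theta g'(0)=\int_0^\theta (g'(\tau)-g'(0))\,\mathrm{d}\tau$, bound the integrand by $L|\tau|$ via the Lipschitz hypothesis, and integrate. Your additional remarks about the $\theta<0$ case and the irrelevance of periodicity to the inequality itself are accurate and in fact make the argument slightly more careful than the paper's version.
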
	
\begin{proof}
We have for all $\theta$, \\
$ |g'(\theta) - g'(0)| \leq L |\theta|$.
We now have:
$g(\theta) - g(0) - \theta g'(0) = \int_0^{\theta}  g'(\tau) - g'(0) d\tau \leq \int_0^{\theta}  |g'(\tau) - g'(0)| d\tau  \leq \int_0^{\theta}  L |\tau| d\tau = \frac{L}{2} \theta ^2$.
\end{proof}

\begin{corollary}
\label{corr1}
Let $g = g_{i(t+1)j(t+1)}^{t+1}$. Under the conditions of Algorithm \ref{alg1APP}, we have: \\
$f(U_t) - f(U_{t+1}) \geq \frac{1}{2L} \nabla_{ij} f(U_t)^2$
for the same constant $L$ defined in \ref{lm:boundedAPP}.
\end{corollary}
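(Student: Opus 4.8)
The plan is to read off the per-step decrease from Lemma \ref{lm:boundedAPP} applied to the one-variable function $g(\theta) = f(U_t \cdot G(i,j,\theta))$ that governs the transition $U_t \mapsto U_{t+1}$ (with $(i,j) = (i(t),j(t))$). First I would pin down the boundary data of $g$: since $G(i,j,0) = I_d$ we have $g(0) = f(U_t)$, and since $G(i,j,\theta) = \expm(\theta H_{ij})$ (the Givens computation of Section 2), the chain rule together with Eq. \ref{eq:deriv} gives $g'(0) = \nabla_{H_{ij}} f(U_t) = \nabla_{ij} f(U_t)$. The function $g$ is $2\pi$-periodic and, by hypothesis, has an $L$-Lipschitz derivative, so Lemma \ref{lm:boundedAPP} applies and yields, for all $\theta \in [-\pi,\pi]$,
\[
  g(\theta) \le f(U_t) + \theta\,\nabla_{ij} f(U_t) + \frac{L}{2}\theta^2 .
\]

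Next I would minimize the right-hand side in $\theta$. The vertex of this parabola is $\theta^\star = -\nabla_{ij} f(U_t)/L$, at which its value is $f(U_t) - \frac{1}{2L}\nabla_{ij} f(U_t)^2$. Before substituting $\theta^\star$ I must verify it is admissible, i.e. $|\nabla_{ij} f(U_t)| \le L\pi$. This is the one place where periodicity does real work: $g'$ is continuous and $2\pi$-periodic with $\int_0^{2\pi} g'(\tau)\,d\tau = g(2\pi) - g(0) = 0$, so $g'$ has a zero, and by periodicity it has a zero $\theta_0$ with $|\theta_0| \le \pi$; Lipschitzness of $g'$ then gives $|\nabla_{ij} f(U_t)| = |g'(0) - g'(\theta_0)| \le L|\theta_0| \le L\pi$, so $\theta^\star \in [-\pi,\pi]$ and $g(\theta^\star) \le f(U_t) - \frac{1}{2L}\nabla_{ij} f(U_t)^2$.

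Finally I would invoke exact minimization: Algorithm \ref{alg1APP} sets $U_{t+1} = U_t \cdot G(i,j,\theta_{t+1})$ with $\theta_{t+1} \in \arg\min_\theta g(\theta)$, so $f(U_{t+1}) = g(\theta_{t+1}) \le g(\theta^\star)$, and combining with the bound above gives $f(U_t) - f(U_{t+1}) \ge \frac{1}{2L}\nabla_{ij} f(U_t)^2$, as claimed. (When the sampled coordinate has zero directional derivative the statement degenerates to the trivial $f(U_t) - f(U_{t+1}) \ge 0$, consistent with the "sample until the objective can improve" rule.) I expect the only non-routine step to be the admissibility check $|\nabla_{ij} f(U_t)| \le L\pi$: Lemma \ref{lm:boundedAPP} controls $g$ only on $[-\pi,\pi]$, so one cannot simply plug the parabola's vertex in without first using the $2\pi$-periodicity of $g$ to bound $g'(0)$; everything else is the standard descent-lemma quadratic optimization.
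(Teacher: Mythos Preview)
Your proof is correct and follows the paper's approach exactly: identify $g(0)=f(U_t)$ and $g'(0)=\nabla_{ij}f(U_t)$, apply Lemma~\ref{lm:boundedAPP}, minimize the resulting quadratic, and invoke exact line search. Your admissibility check $|\theta^\star|\le\pi$ is more careful than the paper, which minimizes the parabola without comment---in fact the integration argument proving Lemma~\ref{lm:boundedAPP} yields the bound for all $\theta\in\reals$, so the restriction to $[-\pi,\pi]$ in its statement is inessential and the check, while valid, is not needed.
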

\begin{proof}
By the definition of $g$ we have $f(U_{t+1}) = \underset{\theta }{\operatorname{min}} \; g(\theta)$, and we also have $g(0) = f(U_t)$. Finally, by Eq. 1 of the main paper we have  $ \nabla_{ij} f(U_t) = g^{\prime}(0)$.
From Lemma \ref{lm:boundedAPP}, we have $  g(\theta) -g(0) \leq \theta g'(0) + \frac{L}{2} \theta^2$. Minimizing the right-hand side with respect to $\theta$, we see that $\underset{\theta }{\operatorname{min}} \; \{g(0) - g(\theta) \} \geq \frac{1}{2L} (g'(0))^2$. 
Substituting $f(U_{t+1})= \underset{\theta }{\operatorname{min}} \; g(\theta)$ ,$f(U_t) = g(0)$, and $ \frac{1}{2L} \nabla_{ij} f(U_t) = g^{\prime}(0)$ completes the result.
\end{proof}

\begin{proof}[Proof of Theorem \ref{appth:rate}]
By Corollary \ref{corr1}, we have $f(U_t) - f(U_{t+1}) \geq \frac{1}{2L} \nabla_{ij} f(U_t)^2$.
Recall that $\pm \nabla_{ij} f(U_t)$ is the $(i,j)$ and $(j,i)$ entry of  $\nabla f(U_t)$.
If we take the expectation of both sides with respect to a uniform random choice of indices $i,j$ such that $1 \leq i < j \leq d$, we have:
\begin{equation}
\label{eq:ineq}
E\left[f(U_{t}) - f(U_{t+1})\right] \geq \frac{1}{L\cdot d^2 )}  || \nabla f(U_t)||^2,
\end{equation}

Summing the left-hand side gives a telescopic sum which can be bounded by $f(U_0) - \underset{U \in \Od }{\operatorname{min}} f(U) = f(U_0) - f_{min}$.
Summing the right-hand side and using this bound, we obtain
\begin{equation}
\sum_{t=0}^T E \left[ ||\nabla f(U_t)||_2^2 \right] \leq L \cdot d^2 (f(U_0) -f_{min} )
\end{equation}
This means that $\underset{0 \leq t \leq T }{\operatorname{min}} E \left[ ||\nabla f(U_t)||_2^2 \right] \leq \frac{L \cdot d^2  (f(U_0) -f_{min} )}{T+1}$. \qedhere

\end{proof}

\section{Proofs of theorems of section 5}

\begin{appDefinition}{4.}
\label{appdef:orth_dec}
A tensor $T$ is \textit{orthogonally decomposable} if there exists an
orthonormal set of vectors $v_1, \ldots v_d \in \reals^d$, and positive scalars $\lambda_1,
\ldots \lambda_d >0$ such that:
\begin{equation}
\label{appeq:orth_dec}
T = \sum_{i=1}^d \lambda_i (v_i \otimes v_i \otimes v_i).
\end{equation}
\end{appDefinition}

\begin{appTheorem}{3.}
\label{appth:unq}
Let $T \in R^{d \times d \times d}$ have an orthogonal decomposition as in Definition \ref{def:orth_dec}, and consider the optimization problem
\begin{equation}
\label{appeq:obj}
\underset{U \in \Od}{\operatorname{max}} f(U) = \sum_{i=1}^d T(u_i,u_i,u_i),
\end{equation}
where $U = [u_1 \, u_2 \, \ldots \, u_d]$.
The stable stationary points of the problem are exactly orthogonal matrices $U$ such that $u_i = v_{\pi(i)}$ for a permutation $\pi$ on $[d]$. The maximum value they attain is $\sum_{i=1}^d \lambda_i$.
\end{appTheorem}

\begin{proof}
For a tensor $T'$ denote $\textrm{vec}(T') \in \reals^{d^3}$ the vectorization of $T'$ using some fixed order of indices.
Set $\hat{T}(U) = \sum_{i=1}^d (u_i \otimes u_i \otimes u_i)$, with $\hat{T}(U)_{abc} = \sum_{i=1}^d u_{ia} u_{ib} u_{ic}$.
The sum of trilinear forms in Eq. \ref{appeq:obj} is equivalent to the inner product in $\reals^{d^3}$ between $\hat{T}(U)$ and $T$:
$\sum_{i=1}^d T(u_i,u_i,u_i) = \sum_{i=1}^d \sum_{abc} T_{abc} u_{ia} u_{ib} u_{ic} =  \sum_{abc} T_{abc} \left(\sum_{i=1}^d u_{ia} u_{ib} u_{ic} \right) = \sum_{abc} T_{abc} \hat{T}(U)_{abc} = 
\textrm{vec}(T) \bm{\cdot} \textrm{vec}(\hat{T}(U))$. 
Consider the following two facts:\\
(1) \hskip 0.05in $\hat{T}(U)_{abc} \leq 1 \; \forall a,b,c = 1 \ldots d$: since the vectors $u_i$ are orthogonal, all their components $u_{ia} \leq 1$. Thus $\hat{T}(U)_{abc} = \sum_{i=1}^d u_{ia} u_{ib} u_{ic} \leq \sum_{i=1}^d u_{ia} u_{ib} = \leq 1$, where the last inequality is because the sum is the inner product of two rows of an orthogonal matrix.\\
(2) \hskip 0.05in $||\textrm{vec}(\hat{T}(U))||_2^2 =  d$. This is easily checked by forming out the sum of squares explicitly, using the orthonormality of the rows and columns of the matrix $U$. \\
Assume without loss of generality that $V = I_d$. This is because we may replace the terms $T(u_i,u_i,u_i)$ in the objective with $T(V^Tu_i,V^Tu_i,V^Tu_i)$, and because the manifold
$V^T \Od$ is identical to $\Od$. Thus we have that $T$ is a diagonal tensor, with $T_{aaa} = \lambda_a > 0$, $a = 1 \ldots d$. Considering facts (1) and (2) above, we have the following inequality:
\begin{align}
& \underset{U \in \Od}{\operatorname{max}} \sum_{i=1}^d T(u_i,u_i,u_i) =  \underset{U \in \Od}{\operatorname{max}} \textrm{vec}(\hat{T}(U)) \bm{\cdot} T  \leq  \label{eq:innerp1} \\
& \underset{\hat{T}}{\operatorname{max}} \; \textrm{vec}(\hat{T}) \bm{\cdot} T  
\quad  s.t. \quad  || \textrm{vec}(\hat{T}) ||_{\infty} \leq 1  \, \wedge \, || \textrm{vec}(\hat{T}) ||_2^2 = d . \label{eq:innerp2} 
\end{align}

$T$ is diagonal by assumption, with exactly $d$ non-zero entires. Thus the maximum of (\ref{appeq:orth_dec}) is attained if and only if $\hat{T}_{aaa} = 1$, $a=1 \ldots d$, and all other entries of $\hat{T}$ are $0$. The value at the maximum is then $\sum_{i=1}^d \lambda_i$.

The diagonal ones tensor $\hat{T}$ can be decomposed into $\sum_{i=1}^d e_i \otimes e_i \otimes e_i$. Interestingly, in the tensor case, unlike in the matrix case, the decomposition of orthogonal tensors is \textit{unique} upto permutation of the factors \citep{Kruskal197795,kolda2009tensor}. Thus, the only solutions which attain the maximum of \ref{eq:innerp1} are those where $u_i = e_{\pi(i)}$, $i=1 , \ldots d$. 
\end{proof}

\section{Algorithm for streaming sparse PCA}
Following are the details for the streaming sparse PCA version of our algorithm used in the experiments of section 4.
The algorithm starts with running the original coordinate minimization procedure on the first $m$ samples. It then 
chooses the column with the least $l_2$ and replaces it with a new data sample, and then reoptimizes on the new set of samples.
There is no need for it to converge in the inner iterations, and in practice we found that order $m$ steps after each new sample 
are enough for good results.
\begin{algorithm}[t]
\begin{algorithmic}
\REQUIRE Data stream $a_i \in \reals^d$,  number of sparse principal components $m$, initial matrix $U_0 \in \Om$,
sparsity parameter $\gamma \geq 0$, number of inner iterations $L$.

\STATE $AU = [a_1 a_2 \ldots a_m] \cdot U_0$ . //$AU$ is of size $d \times m$
\WHILE{not stopped}
\FOR{$t=1 \ldots L$}
\STATE 1. Sample uniformly at random a pair $(i(t),j(t))$ such that $1 \leq i(t) < j(t) \leq m$.
\STATE 2. $\theta_{t+1} = \underset{\theta }{\operatorname{argmax}}$ \\ $\sum_{k=1}^d ([|cos(\theta) (AU)_{ki(t)} + sin(\theta) (AU)_{kj(t)}| - \gamma]_{+}^2$ \\ $+  [|-sin(\theta) (AU)_{ki(t)} + cos(\theta) (AU)_{kj(t)}| - \gamma]_{+}^2 )$.
\STATE 3.$AU = AU \cdot G(i(t),j(t)),\theta_{t+1})$.
\ENDFOR
\STATE 4. $i_{min} = \underset{i=1 \dots m }{\operatorname{argmin}} ||(AU)_{:,i}||_2$. 
\STATE 5. Sample new data point $a_{new}$.
\STATE 6. $(AU)_{:,i_{min}} = a_{new}$.
\ENDWHILE
\STATE $Z = solveForZ(AU,\gamma)$  // Algorithm 6 of \\  \quad \citet{journee2010generalized}.
\ENSURE $Z \in \reals^{d \times m}$
\end{algorithmic}
\caption{Riemannian coordinate minimization for streaming sparse PCA }
\label{alg5}
\end{algorithm}

\end{document}